\def\eqref#1{equation~\ref{#1}}
\def\1{\bm{1}}
\def\rx{{\textnormal{x}}}
\def\ry{{\textnormal{y}}}
\def\rvx{{\mathbf{x}}}
\def\rvz{{\mathbf{z}}}
\def\vzero{{\bm{0}}}
\def\vb{{\bm{b}}}
\def\ve{{\bm{e}}}
\def\vg{{\bm{g}}}
\def\vx{{\bm{x}}}
\def\vz{{\bm{z}}}
\def\mA{{\bm{A}}}
\def\mL{{\bm{L}}}
\DeclareMathAlphabet{\mathsfit}{\encodingdefault}{\sfdefault}{m}{sl}
\SetMathAlphabet{\mathsfit}{bold}{\encodingdefault}{\sfdefault}{bx}{n}
\newcommand{\E}{\mathbb{E}}
\newcommand{\R}{\mathbb{R}}
\DeclareMathOperator*{\argmin}{arg\,min}
\theoremstyle{plain}
\newtheorem{theorem}{Theorem}[section]
\newtheorem{claim}[theorem]{Claim}
\newtheorem{lemma}[theorem]{Lemma}
\newtheorem{proposition}[theorem]{Proposition}
\newtheorem*{question*}{Question}
\theoremstyle{definition}
\newtheorem{definition}[theorem]{Definition}
\crefname{claim}{claim}{claim}
\Crefname{claim}{Claim}{Claim}
\def\vtheta{{\bm{\theta}}}
\def\vtheta{{\bm{\theta}}}
\def\vvtheta{{\vec{\bm{\theta}}}}
\def\vvvartheta{{\vec{\bm{\vartheta}}}}
\def\vvxi{{\vec{\bm{\xi}}}}
\def\vve{\vec{\ve}}
\title{Multi-agent Performative Prediction: From Global Stability and Optimality to Chaos\footnote{This research-project is supported in part by the National Research Foundation, Singapore under NRF 2018 Fellowship NRF-NRFF2018-07, AI Singapore Program (AISG Award No: AISG2-RP-2020-016), NRF2019-NRF-ANR095 ALIAS grant, AME Programmatic Fund (Grant No. A20H6b0151) from the Agency for Science, Technology and Research (A*STAR), grant PIE-SGP-AI-2018-01 and Provost's Chair Professorship grant RGEPPV2101.
}
}
\author{Georgios Piliouras\thanks{SUTD, \texttt{georgios@sutd.edu.sg}}\and Fang-Yi Yu\thanks{Harvard University, \texttt{fangyiyu@seas.harvard.edu}}}
\date{}
\begin{document}

\maketitle
\begin{abstract}
The recent framework of performative prediction~\citep{perdomo2020performative} is aimed at capturing settings where predictions influence the target/outcome they want to predict. In this paper, we introduce a natural multi-agent version of this framework, where multiple decision makers try to predict the same outcome. 
We showcase that such competition can result in interesting phenomena by proving the possibility of phase transitions from stability to instability and eventually chaos. Specifically, we present settings of multi-agent performative prediction where under sufficient conditions their dynamics lead to global stability and optimality. In the opposite direction, when the agents are not sufficiently cautious in their learning/updates rates, we show that instability and in fact formal chaos is possible. We complement our theoretical predictions with simulations showcasing the predictive power of our results. 
\end{abstract}
\section{Introduction}

Performative prediction~\citep{perdomo2020performative}  is a recently introduced framework that focuses on a natural but largely unexplored element of supervised learning. In many practical cases the predictive model can affect the very outcome that it is trying to predict. For example, predictions about which part of a road network will have high congestion trigger responses from the drivers which affect the resulting traffic realization leading to a shift of the target distribution. In such settings,~\citep{perdomo2020performative} explored conditions for the existence and approximate optimality of stable equilibria of such processes. 

One possible way to interpret the performative prediction setting is a single agent ``game'', where the predictive agent is playing a game against himself. An agent chooses the parameters of his model as his actions but the predictive accuracy/cost of the model depends on his own past actions. Fixed points of this process do not allow for profitable deviations. Once cast in this light, it  becomes self-evident that the restriction to a single predictive agent/model is arguably only the first step in capturing more general phenomena where there is a closed loop between predictive models and their environment. This motivates our central question: 
\begin{question*}
What is the interplay between stability, optimality  in cases where multiple predictive models operate in parallel to each other? Can the competition between multiple models lead to novel phenomena such as phase transitions from stability to instability and chaos?
\end{question*}

A natural setting to consider for example is market competition, where multiple hedge funds are trying to simultaneously predict future prices, volatility of financial instruments. Of course as they act upon their predictions they also move the prices of these commodities in a highly correlated way.  As more agents enter the market and the competition becomes increasingly fierce, is it possible that at some point the market flips from quickly discovering accurate stable predictions reflecting the underlying market fundamentals to self-induced unpredictability and chaos? When it comes to performative prediction can too many cooks spoil the soup?


\subsection{Our model} 
Standard supervised learning consists of three components: a set of predictive models, a loss function, and a data distribution.  The learner (agent) observes samples of the distribution, and then decides a predictive model.  
When predictions are performative, the agent's decision of a predictive model influences the data distribution.   Thus, instead of a fixed data distribution, a predictive prediction also has a \emph{distribution map} which is a map from the agent's predictive models to data distributions.  We further propose \emph{multi-agent performative prediction} which model the influence of multiple agents' decisions on the data distribution, and ask whether this influence leads to convergent or chaotic systems.

To understand the framework of multi-agent performative prediction, we study a natural regression problem with multi-agent location-scale distribution maps (\cref{def:location-scale}) where the agents' influence is linear in the agents' models.  Specifically, the data consist of features and outcomes.  The influence of agent $i\in [n]$ on the outcome distribution is his model weighted by a scalar $\lambda_i>0$.  When $n = 1$, our multi-agent location-scale distribution map is a special case of location-scale family in \citet{miller2021outside}.
In the market example, each hedge fund company predicts the price (outcome) based on macroeconomic data or other information (features).  These influence parameters $\lambda_1,\dots,\lambda_n$ can be seen as each hedge fund's capital that can influence the future price.

Moreover, the agents often cannot know the distribution map that captures the dependency between their model and the data distribution.  Therefore, we consider that the agents use a reinforcement learning algorithm, exponentiated gradient for linear regression~\citep{kivinen1997exponentiated}, to myopically and iteratively improve their predictive models in rounds.  We study the long-term behavior and ask if the system converges to the performative stable and optimal point, or behaves chaotically.

\subsection{Our results}
We first study the basic properties of our multi-agent preformative prediction with a multi-agent location-scale distribution map.  We show 1) the existence of performative stable point in our setting (\cref{prop:existence}) and 2) the performative stability and performative optimality are equivalent (\cref{prop:stable2optimal}).  This equivalence allows us to focus on the dynamical behavior of the system.  

In \cref{sec:dynamics}, we introduce learning dynamics to the multi-agent performative prediction where the agents use reinforcement learning algorithms, and study their long-term behavior. 
We provide a threshold result on the learning rates of exponentiated gradient descent and the collective influence:  \Cref{thm:converge} shows the dynamics of exponential gradient descent converge to the performative stable and optimal point when the learning rate is small enough fixing agents' influence parameters $\lambda_1, \dots, \lambda_n$.  Our convergence result in \cref{thm:converge} holds when the feature space is multi-dimensional, and every learning agent can use different learning rates starting at arbitrarily interior states.  Our exact convergence result also works in the single-agent performative prediction setting.  Contrarily, previous convergent results in \cite{perdomo2020performative,miller2021outside,mendler2020stochastic} only show that their dynamics converge to a small neighborhood of the performative stable point.

On the other hand, \cref{sec:chaos} shows the dynamics can have chaotic behavior if the collective influence $\sum_i \lambda_i$ is large enough for any fixed learning rate.  Specifically, \cref{thm:chaos_one} shows that even when the feature space is in $\R^2$ these systems provably exhibit Li-Yorke chaos, when the collective influence $\sum_i \lambda_i$ is large enough.   This implies that there exists an uncountable ``scrambled'' set so that given any two initial conditions in the set, the liminf of the distance between these two dynamics is zero, but the limsup of their distance is positive. (\cref{def:chaos})   The chaotic result in \cref{thm:chaos_one} also holds for the original single-agent performative prediction so long as the agent's influence is large enough, and, thus, complements previous performative prediction works on convergence behavior, which primarily consider that the agent's influence on the data distribution is sufficiently small.  {Moreover, no matter how small the agents' learning rates, \cref{thm:chaos_one} shows that chaos is inevitable in some performative predictions settings when the number of agents exceeds a carrying capacity.  After that, the system becomes unpredictable with small perturbations exploding exponentially fast.}  Though our chaotic result closely follows \citet{chotibut2019route}, we need a new proof for the existence of period three orbits, because the fixed point of our dynamics changes as the total influence $\sum_i \lambda_i$ increases.

Finally, \cref{sec:simulation} provides numerical examples of our dynamics and shows convergent and chaotic behavior.  Additionally, through simulation, we demonstrate that our convergent and chaotic results also hold when the agents can only access noisy estimation of the gradient and conduct stochastic exponentiated gradient descent.

\paragraph{Related work}
Data distribution shift is not a new topic in ML, but earlier works focused primarily on exogenous changes to the data generating distribution.  Performativity is machine learning was introduced by \cite{perdomo2020performative}.  The original work and several follow-ups study the discrepancy between performative stability and performative optimality and prove approximated convergence of learning dynamics, e.g., stochastic gradient descent, or iterative empirical risk minimization.~\citep{mendler2020stochastic,drusvyatskiy2020stochastic,izzo2021learn}  Performativity of prediction is also related to several applications: strategic classification \citep{hardt2016strategic}, retraining~\cite{bartlett1992learning,kuh1990learning}.

Inspired by the instability of training algorithms in ML applications such as Generative Adversarial Networks (GANs), there has been a lot of recent interest in understanding conditions (particularly in multi-agent systems) where learning behavior may be non-equilibrating/unstable~\citep{cheung2020chaos,balduzzi2020smooth,flokas2020no,andrade2021learning,letcher2021impossibility,giannou2021survival}.
The  (in)stability and performance of exponentiated gradient descent in particular (also referred to as Multiplicative Weights Updates) and other closely related dynamics has attracted a lot of attention~\citep{cohen2017learning,BP2018,Cheung2018,panageas2019multiplicative,CP2020,vadori2021consensus}. The technique of Li-Yorke chaos has recently found applications across several  different domains such as routing games, Cournot games and blockchain protocols~\citep{
DBLP:journals/corr/PalaiopanosPP17,chotibut2019route,bielawski2021follow,CheungLP2021,leonardos2021dynamical}. To our knowledge, this is the first time where such formal chaotic results are established in settings related to performative prediction and supervised learning more generally.

{Another line of related works is learning in games. Specifically, our dynamics can be seen as special cases multiplicative weight update (Hedge algorithms) on congestion games.  Previous works on Hedge algorithms only show exact convergence when the learning rate is decreasing~\cite{kleinberg2009multiplicative,cohen2017learning}, and, to our best knowledge, our results are the first that shows exact convergence of Hedge algorithms with small constant learning rates.  Our results also complement the exact convergence result of the linear variant of multiplicative weight update by \citet{DBLP:journals/corr/PalaiopanosPP17}.}

\section{Preliminary}

\subsection{Multi-agent Performative Prediction}
We first formulate the framework of multi-agent performative prediction, and we introduce an example of multi-agent performative prediction on which we will focus.

A \emph{multi-agent performative prediction} comprises $n$ agents deploying their predictive models $f_{\theta^1}, \dots, f_{\theta^n}$ with parameters $\vtheta^1, \vtheta^2, \dots, \vtheta^n\in \Theta$ that collectively influence the future data distribution.  We formalize such dependency via a \emph{distribution map} $\mathcal{D}(\cdot)$ which outputs a distribution on the data, $\mathcal{D}(\vvtheta)$, given a \emph{models profile} $\vvtheta := (\vtheta^1, \dots, \vtheta^n)\in \Theta^n$.  
A loss function $\ell(\vz,\vtheta')$ measures the loss of a model $f_{\vtheta'}$ on a data point $\vz\in \mathcal{Z}$, and the expected loss on a distribution $\mathcal{D}$ is $\E_{\rvz\sim \mathcal{D}}[\ell(\rvz, \vtheta')]$.  For performative prediction, we further define the decoupled performative loss on a distribution mapping $\mathcal{D}$ as 
$$\ell(\vvtheta,\vtheta'):= \E_{\rvz\sim \mathcal{D}(\vvtheta)} \ell(\rvz,\vtheta')$$
where $\vtheta'\in \Theta$ denotes a \emph{predictive model}, while $\vvtheta\in \Theta^n$ denotes a \emph{deployed model profile}.

Given the set of model $\Theta$, the loss function $\ell$, and the distribution mapping $\mathcal{D}$, each agent in a multi-agent performative prediction $(\Theta, \ell, \mathcal{D})$ pursues minimal loss on the distribution that they collectively induce. 
We consider two solution concepts \emph{performative optimality} and \emph{performative stability} { which generalizes the original ones in \cite{perdomo2020performative}}.
\begin{definition}[label = def:stable]
Given $(\Theta, \ell, \mathcal{D})$, a models profile $\vvtheta^*{\in \Theta^n}$ is \emph{performatively optimal} if the total loss is minimized,
$$\vvtheta^* \in \argmin_{\vvtheta{\in \Theta^n}} \sum_i \ell(\vvtheta, \vtheta^i).$$
Another desirable property of a model profile is that, given all agents deploy their models, their models are also simultaneously optimal for distribution that their model induces.  Formally, $\vvtheta^*$ is 
\emph{performatively stable} if for all $i\in [n]$
$$(\vtheta^*)^i\in \argmin_{\vtheta^i{\in \Theta}} \ell(\vvtheta^*, \vtheta^{i}).$$
\end{definition}
{The performative optimality does not implies the performative stable point.  For performative optimal point, the variable for minimization $\vvtheta$ affects both the first and the second argument, but only affect the second one for performative stable point.}
Now we introduce our model in this paper.  
\subsubsection*{Location-scale distribution map}
In this paper, we study the family of multi-agent location-scale map for regression problem where a data point consists of $d$-dimensional feature and scalar outcome, $\vz = (\vx, y)$.  In the location-scale distribution map, the performative effects is linear in $\vvtheta$.
\begin{definition}\label{def:location-scale}
Given $d\in \mathbb{N}$ and $d\ge 2$, and ${\Theta^n}\subseteq \R^d$, a distribution map $\mathcal{D}: {\Theta^n}\to \R^d\times \R$  is a \emph{multi-agent location-scale distribution map on $n$ parties} if there exists a static distribution $\mathcal{D}_X$ on $\R^{d+1}$, $\vtheta^0\in \R^d$, and $n$ linear functions $\Lambda_1, \dots, \Lambda_n$ from $\R^d$ to $\R^d$ so that the distribution of $(\rvx,\ry)\sim \mathcal{D}(\vvtheta)$ has the following form: The feature is $\vx\in \R^d$ and noise $x_0\in \R$ is jointly sampled from $\mathcal{D}_X$.  Given feature $\vx\in \R^d$ and noise $x_0\in \R$, and the the outcome is 
$${y = \left\langle\vtheta^0-\sum_{i = 1}^n \Lambda_i(\vtheta^i), \vx\right\rangle+x_0.}$$
\end{definition}
In this paper, we consider the scaling maps $\Lambda_i(\vtheta) = \lambda_i\vtheta$ for all $\vtheta\in \Theta$ with scalar $\lambda_i>0$ for all $i\in [d]$.  We call $\bm{\lambda} := (\lambda_1, \dots, \lambda_n)$ the influence parameters, and $L_n:= \sum_{i = 1}^n\lambda_i$ collective influence.  Furthermore, we let $\mA := \E\left[\rvx \rvx^\top\right]\in \R^{d\times d}$ be the covariance matrix of the feature, and 
$\vb:= \mA \vtheta^0+\E[\rx_0 \rvx]\in \R^d$.  We will specify the multi agent location-scale distribution map with parameters $n, d, \bm{\lambda}, \mA$, and $\vb$.

When $n = 1$, our multi-agent location-scale distribution map is a special case of location-scale family in \citet{miller2021outside} where the model $\vtheta$ may both the outcome $y$ as well as the feature $\vx$.

\subsubsection*{Linear Predictive Models and Mean Squared Loss Function}
We consider \emph{linear predictive model with constraint} where $f_{\vtheta'}(\vx) = \langle\vtheta', \vx\rangle$, and the collection of parameter is the $d$-simplex, $\Theta = \{\vtheta: \sum_{k = 1}^d \theta_k = 1, \theta_k\ge 0\}$.  
We use \emph{mean squared error} to measure a predictive model $\vtheta'\in \Theta$ on a distribution map with a deployed model profile $\vvtheta\in \Theta^n$, 
$\ell(\vvtheta,\vtheta') = \E_{(\rvx,\ry)\sim \mathcal{D}(\vvtheta)}[(\ry-f_{\vtheta'}(\rvx))^2] = \E_{(\rvx,\ry)\sim \mathcal{D}(\vvtheta)}[(\ry-\vtheta'\cdot \rvx)^2]$.

Given a deployed model profile $\vvtheta$ and a predictive model $\vtheta'$, the gradient of the decoupled loss is $\vg(\vvtheta, \vtheta'):= \nabla_{\vtheta'}\ell(\vvtheta, \vtheta')$.  If $\mathcal{D}$ is a location-scale distribution map, 
$\vg(\vvtheta, \vtheta') = \E_{(\rvx, \ry)\sim \mathcal{D}(\vvtheta)}[2(\vtheta'\cdot \rvx-\ry)\rvx]$.  Furthermore, with $\mA$ and $\vb$ defined in \cref{def:location-scale}, the gradient can be written as 
\begin{equation}\label{eq:grad}
    \vg(\vvtheta, \vtheta') = \nabla_{\vtheta'}\ell(\vvtheta, \vtheta')  = 2\mA\left(\vtheta'+\sum_i \lambda_i \vtheta^i\right)-2\vb.
\end{equation}

Additionally, given a deployed model profile $\vvtheta$ and a predictive model profile $\vvtheta'$, we define the gradient of agent $i$'s decoupled loss as $\vg^i(\vvtheta, \vvtheta'):= \vg(\vvtheta, (\vtheta')^i)$, and $\vg^i(\vvtheta):=\vg^i(\vvtheta, \vvtheta)$ when the deployed model profile is identical to the predictive model profile.  We denote the gradient of agent $i$'s average loss as $\bar{g}^i(\vvtheta):= \sum_l \theta_l^i g^i_l(\vvtheta)$ for all $\vvtheta\in \Theta^n$. Finally, we define  $\vec{\bm{\xi}}(\vvtheta) = (\bm{\xi}^1, \dots, \bm{\xi}^n)$ with $\bm{\xi}^i\in \R^d$ so that for all $i\in [n]$ and $k\in [d]$ 
$\xi^i_k(\vvtheta):= \theta^i_k(\sum \theta^i_lg^i_l-g^i_k)$ which is the gradient subtracted by the average gradient.
For brevity, we omit $\vvtheta$ and define $\vec{\vg} := \vec{\vg}(\vvtheta)$ and $\vvxi := \vvxi(\vvtheta)$ when there is no ambiguity.



\subsection{Learning dynamics}
We consider the agents myopically use a reinforcement learning algorithm \emph{exponentiated gradient for linear regression} to improve their predictive models in rounds.  We first define the original single agent's exponentiated gradient for linear regression on a sequence of data points here and will state our dynamics on multi agent performative prediction in \cref{sec:dynamics}. 
\begin{definition}[name = \citet{kivinen1997exponentiated}, label = def:exponentiated]
Given a learning rate $\eta>0$, an initial parameter $\vtheta_0\in \Theta$ and a sequential of data points $(\vx_t, y_t)_{t\ge 1}$, the \emph{exponentiated gradient descent for linear regression} iteratively updates the model as follows:   At round $t+1$, with previous parameter $\vtheta_t\in \Theta$ the exponentiated gradient algorithm updates it to $\vtheta_{t+1}$ so that
$$\theta_{t+1,k}= \frac{\theta_{t,k}\exp(-2\eta (\vtheta_t\cdot \vx_t-y_{t})x_{t,k})}{\sum_l \theta_{t,l}\exp(-2\eta (\vtheta_t\cdot \vx_t-y_{t})x_{t,l})}\text{ for all }k\in [d].$$
\end{definition}
Note that each exponent $2(\vtheta_t\cdot \vx_t-y_t)x_{t,k}$ is the $k$-th coordinate of the gradient of squared error $\ell((\vx_t, y_t), \vtheta_t)$ at $\vtheta_t$ which yields the name of exponentiated gradient descent. 

\subsection{Dynamical system}
\begin{definition}[label = def:chaos]
Let $(\mathcal{X},f)$ be a dynamical system where $\mathcal{X}$ is a metric space and $f$ is a mapping on $\mathcal{X}$.  We say $(x,x')\in \mathcal{X}\times\mathcal{X}$ is a Li-Yorke pair if 
$$\liminf_t dist(f^t(x), f^t(x')) = 0\text{ and }\limsup_t dist(f^t(x), f^t(x')>0.$$
$(\mathcal{X}, f)$ is Li-Yorke chaotic if there is an uncountable set $S\subset \mathcal{X}$ (scrambled set) such that any pair of points $x\neq x'\in S$ is Li-Yorke pair.
\end{definition}

\section{Multi-agent Performative Learning: Stability and Optimality}\label{sec:equilibrium}
Having introduced multi-agent performative prediction, we show some basic property of our location-scale distribution map with mean squared error as a warm-up.  Using the first order condition and a potential function argument, we show 1) the existence of performative stable point in our model (\cref{prop:existence}) and 2) the performative stability and performative optimality are equivalent (\cref{prop:stable2optimal}).  {The proofs of both propositions are in  \cref{app:equilibrium}.}


We first show the existence and uniqueness of performative stable point through a potential function argument.  {The first part is due to the first order condition of convex optimization.  The second part also use the first order condition, and $
    \frac{\partial}{\partial \theta^i_k}\Phi(\vvtheta) = \lambda_i g^i_k(\vvtheta)$ for all $i\in [n]$ and $k\in [d]$. }
\begin{proposition}[name = existence, label = prop:existence]
Given the family of linear predictive models with constrains $\Theta$, mean squared error $\ell$ and multi-agent location-scale distribution map with parameters $n, d, \bm{\lambda}, \mA, \vb$ in \cref{def:location-scale}, $\vvtheta^*$ on $(\Theta, \ell, \mathcal{D})$ is performative stable if and only if the gradient (defined in \cref{eq:grad}) satisfies $    g^i_k(\vvtheta^*)\le g^i_l(\vvtheta^*)$
for all $i\in [n]$ and $k\in [d]$ with $\theta^i_k>0$.  

Furthermore, if $\mA$ is positive definite, there exists a unique performative stable point $\vvtheta^*$ that is also a global minimizer of the following convex function
\begin{equation}\label{eq:potential_conti}
    \Phi(\vvtheta):= (\sum_i \lambda_i \vtheta^i)^\top \mA(\sum_i \lambda_i \vtheta^i)+\sum_i \lambda_i (\vtheta^i)^\top\mA \vtheta^i-2\vb^\top \sum_i \lambda_i \vtheta^i, \text{ for all }\vvtheta\in \Theta^n.
\end{equation}
\end{proposition}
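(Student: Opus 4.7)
The plan is to reduce both parts to the KKT optimality conditions for minimization over the simplex, together with a direct Hessian computation for the potential $\Phi$.

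For the characterization in the first part, I observe that for any fixed deployed profile $\vvtheta^*$, the map $\vtheta^i\mapsto\ell(\vvtheta^*,\vtheta^i)$ is a convex quadratic whose gradient is $\vg^i(\vvtheta^*)$ from \eqref{eq:grad}. The first-order conditions for minimizing a convex function over the simplex $\Theta$ state that $(\vtheta^*)^i$ is optimal iff there exists a scalar $\mu_i$ with $g^i_k(\vvtheta^*)\ge \mu_i$ for every $k\in[d]$, with equality whenever $(\theta^*)^i_k>0$. In other words, every coordinate in the support of $(\vtheta^*)^i$ attains the minimum of $\{g^i_l(\vvtheta^*)\}_{l\in[d]}$, which is exactly the stated inequality $g^i_k(\vvtheta^*)\le g^i_l(\vvtheta^*)$ for all $l$ whenever $(\theta^*)^i_k>0$.

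For the second part, I first verify by direct differentiation of \eqref{eq:potential_conti} that $\nabla_{\vtheta^i}\Phi(\vvtheta) = \lambda_i\,\vg^i(\vvtheta)$, matching the hint in the statement. Since each $\lambda_i>0$, the KKT conditions for $\min_{\vvtheta\in\Theta^n}\Phi(\vvtheta)$ over the product simplex reduce, block-by-block, to exactly the performative-stability characterization of the first part. It remains to establish that $\Phi$ has a unique minimizer on the compact convex set $\Theta^n$, which I do by proving strict convexity when $\mA\succ 0$. Viewing $\vvtheta$ as a vector in $\R^{nd}$ with block index $i\in[n]$, a block Hessian computation yields
\[
\nabla^2\Phi \;=\; 2\,\mathrm{diag}(\lambda_1,\dots,\lambda_n)\otimes\mA \;+\; 2\,(\bm{\lambda}\bm{\lambda}^\top)\otimes\mA.
\]
The first summand is positive definite whenever $\mA\succ 0$ and all $\lambda_i>0$, and the second is positive semidefinite, so $\nabla^2\Phi\succ 0$; strict convexity plus compactness of $\Theta^n$ yields a unique minimizer, which by the KKT equivalence is the unique performatively stable point.

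The step I expect to require the most care is the Hessian bookkeeping. The cross quadratic $(\sum_i\lambda_i\vtheta^i)^\top\mA(\sum_i\lambda_i\vtheta^i)$ contributes $2\lambda_i\lambda_j\mA$ to every block $(i,j)$ (including $i=j$), while the separable term $\sum_i\lambda_i(\vtheta^i)^\top\mA\vtheta^i$ adds $2\lambda_i\mA$ only on diagonal blocks; combining these gives the Kronecker decomposition above. Once that accounting is confirmed, the remainder is a short appeal to standard convex optimization over the product simplex.
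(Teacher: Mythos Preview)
Your proposal is correct and follows essentially the same route as the paper: KKT conditions over the simplex for the stability characterization, the identity $\nabla_{\vtheta^i}\Phi=\lambda_i\,\vg^i$ to link minimizers of $\Phi$ with stable points, and a block Hessian computation yielding a Kronecker product with $\mA$ to establish strict convexity. Your split $\nabla^2\Phi = 2\,\mathrm{diag}(\bm{\lambda})\otimes\mA + 2\,(\bm{\lambda}\bm{\lambda}^\top)\otimes\mA$ is exactly the paper's $\mL\otimes\mA$ with $\mL=2\,\mathrm{diag}(\bm{\lambda})+2\,\bm{\lambda}\bm{\lambda}^\top$, and in fact makes the positive definiteness of $\mL$ more transparent than the paper's bare assertion.
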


While model profile is performative stable does not necessary mean the loss is minimized, the following proposition shows that optimality and stability are equivalent in our setting.
\begin{proposition}[label = prop:stable2optimal]
Given $(\Theta, \ell, \mathcal{D})$ defined in \cref{prop:existence}, $\vvtheta^*$ is performative stable if and only if  $\vvtheta^*$ is performatively optimal defined in \cref{def:stable}.
\end{proposition}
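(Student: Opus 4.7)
The plan is to prove the equivalence by establishing two facts: every stable profile is optimal, via a direct KKT calculation; and the optimality objective $S(\vvtheta) := \sum_i \ell(\vvtheta,\vtheta^i)$ is strictly convex, so its minimizer is unique. Combined with the uniqueness of the stable point in \cref{prop:existence}, these facts force the stable and optimal sets to coincide in a single point.

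The linchpin of the forward direction is a reduction: at a stable $\vvtheta^*$, agent $i$'s best-response problem $\argmin_{\vtheta\in\Theta}\ell(\vvtheta^*,\vtheta)$ depends on $\vvtheta^*$ only through the induced data distribution, not on the index $i$. Since $\mA\succ 0$ makes $\ell(\vvtheta^*,\cdot)$ strictly convex, its minimizer on the simplex $\Theta$ is unique, so every agent plays the same model $\vtheta^*$ and $\vg^i(\vvtheta^*)=:\vg^*$ is common across $i$. The stability characterization of \cref{prop:existence} then simplifies to $g^*_l = \mu^*$ on $\mathrm{supp}(\vtheta^*)$ and $g^*_l \geq \mu^*$ off support. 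I would then differentiate $S$ by splitting each appearance of $\vtheta^k$ into its role as predictive model (only the $k$-th summand) and its role in the deployed profile (every summand, via the chain rule); using \eqref{eq:grad}, the chain-rule contribution through the first argument is $\lambda_k\vg$, giving $\partial S/\partial\vtheta^k = \vg^k(\vvtheta) + \lambda_k\sum_i\vg^i(\vvtheta)$, which at $\vvtheta^*$ collapses to $(1+n\lambda_k)\vg^*$. Because the simplex constraints on $\Theta^n$ decouple across agents and $1+n\lambda_k>0$, the stability KKT on $\vg^*$ transfers directly to the KKT conditions for minimizing $S$ on $\Theta^n$ (with multiplier $(1+n\lambda_k)\mu^*$); convexity of $S$ makes KKT sufficient, so $\vvtheta^*$ is optimal.

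For the converse I would show $S$ is strictly convex. A block-wise Hessian calculation gives $\partial^2 S/\partial\vtheta^k\partial\vtheta^j = 2[\delta_{kj}+\lambda_k+\lambda_j+n\lambda_k\lambda_j]\mA$, so $\nabla^2 S = M\otimes(2\mA)$ with $M = I + \mathbf{1}\bm{\lambda}^\top + \bm{\lambda}\mathbf{1}^\top + n\bm{\lambda}\bm{\lambda}^\top$. Writing $v^\top M v = \|v\|^2 + 2(\mathbf{1}^\top v)(\bm{\lambda}^\top v) + n(\bm{\lambda}^\top v)^2$ and completing the square via Cauchy-Schwarz ($\|v\|^2\geq(\mathbf{1}^\top v)^2/n$) shows $M\succeq 0$, while the matrix-determinant lemma applied to the rank-$2$ update yields $\det M = (1+L_n)^2>0$, so $M\succ 0$. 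Combined with $\mA\succ 0$ this makes $S$ strictly convex, hence $S$ has a unique minimizer on $\Theta^n$; by the forward direction the unique minimizer must be the stable point, so any optimal profile is stable.

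The main obstacle is spotting that all agents must deploy the same model at any stable point. Without this reduction the two KKT systems look structurally different — the stability condition involves each $\vg^i$ individually, while the optimality condition mixes $\vg^k$ with $\sum_i\vg^i$ — and no simple proportionality is apparent; it is precisely the coincidence $\vg^i(\vvtheta^*)=\vg^*$ that makes $\partial S/\partial\vtheta^k$ a positive scalar multiple of $\vg^*$ and thereby forces the two KKT systems to coincide.
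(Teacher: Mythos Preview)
Your argument is correct, and in one respect it is more careful than the paper's. The paper proceeds by asserting convexity of $S(\vvtheta)=\sum_\imath\ell(\vvtheta,\vtheta^\imath)$ and then claiming the identity
\[
\frac{\partial S}{\partial\theta^i_k}=(1+n\lambda_i)\,g^i_k(\vvtheta)
\]
at \emph{every} $\vvtheta$, so that the KKT systems for optimality and stability are literally proportional and hence equivalent. But in the chain-rule step the contribution of $\ell(\vvtheta,\vtheta^\imath)$ for $\imath\neq i$ is $\lambda_i g^\imath_k$, not $\lambda_i g^i_k$; the correct general formula is the one you derive, $\partial S/\partial\vtheta^k=\vg^k+\lambda_k\sum_\imath\vg^\imath$, which collapses to $(1+n\lambda_k)\vg^k$ only when all $\vg^\imath$ coincide. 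Your ``all agents deploy the same model at a stable point'' reduction is precisely what makes that collapse legitimate at $\vvtheta^*$, so the observation you flag as the main obstacle is exactly what justifies the paper's shortcut.

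For the converse the two routes genuinely diverge. The paper relies on the same gradient identity at a general optimal point (where it is not a priori available), whereas you prove strict convexity of $S$ via the Kronecker-product Hessian $M\otimes 2\mA$ with $\det M=(1+L_n)^2>0$ and then invoke uniqueness together with the forward direction. Your route costs a Hessian computation but avoids needing the gradient proportionality anywhere except at the stable profile; the paper's route is shorter but, as written, leans on a formula that only holds after your reduction.
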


\section{Multi-agent Performative Learning: Convergence and Chaos}\label{sec:dynamics}
In \cref{sec:equilibrium}, we study the stability and optimality of location-scale distribution map with mean squared error.  Now we ask when each agent myopically improves his predictive model through reinforcement learning but their predictions are performative, what is the long term behavior of the system? Can the system converges to performative stable and optimal point, or behave chaotically?  

We provide a threshold result depending on the learning rate and the collective influence $L_n$.  \Cref{sec:converge} shows the dynamics converge to the performative stable and optimal point when when the learning rate is small enough.  On the other hand, \cref{sec:chaos} shows the dynamics can have chaotic behavior if the collective influence $L_n$ is large enough.  

Now, we define our dynamics $(\vvtheta_t)_{t\ge 0}$ formally. At each round, each agent accesses the data distribution which influenced by their previous model, and updates his model through exponentiated gradient descent.  Specifically, given an initial model profile $\vvtheta_0\in \Theta^n$ and a learning rate profile $\bm{\eta} := (\eta_1, \dots, \eta_n)$, each agent $i$ applies the exponentiated gradient descent with initial parameter $\vtheta^i_0$ and learning rate $\eta_i>0$:  At round $t+1>1$, each agent $i$ use $\mathcal{D}(\vvtheta_t)$ and estimates the gradient of the (expected) loss, $\vg^i(\vvtheta_t)$ defined in \cref{eq:grad}, and updates his model according to \cref{def:exponentiated},
\begin{equation}\label{eq:dy_general}
    \theta^i_{t+1,k} = \frac{\theta^i_{t,k}\exp(-\eta_i g^i_k(\vvtheta_t))}{\sum_{l = 1}^d \theta^i_{t,l}\exp(-\eta_i g^i_l(\vvtheta_t))}\text{ for all } t\ge 0, i\in [n], \text{ and } k\in [d].
\end{equation}
{We will use superscript to denote agent, $i\in [n]$, and subscript for time, $t\ge 0$, and index of feature, $k,l\in[d]$.}
Recall that the gradient of agent $i$'s average loss is $\bar{g}^i(\vvtheta):= \sum_l \theta_l^i g^i_l(\vvtheta)$, and \cref{eq:dy_general} can be rewritten as
$\frac{\theta^i_k\exp(-\eta_i g^i_k(\vvtheta_t))}{\sum_{l}\theta^i_l\exp(-\eta_i g^i_l(\vvtheta_t))} = \frac{\theta^i_k\exp(\eta_i(\bar{g}^i(\vvtheta_t)- g^i_k(\vvtheta_t)))}{\sum_{l}\theta^i_l\exp(\eta_i( \bar{g}^i(\vvtheta_t)-g^i_l(\vvtheta_t)))}$.  
Finally, given $\vvtheta^*$ and $i\in [d]$, we define the support of $\vvtheta^*$ as $S_i\subseteq [d] = \{k: (\theta^*)^i_k>0\}$, and $\bar{S}_i := [d]\setminus S_i$.  
Then $\vvtheta^*$ is an \emph{equilibrium (or fixed point)} of \cref{eq:dy_general} if 
\begin{equation}\label{eq:fixed_pt}
    g^i_k(\vvtheta^*) = \bar{g}^i(\vvtheta^*), \text{ for all }i\in [n], k\in S_i.
\end{equation}
We say a fixed point of \cref{eq:dy_general} is \emph{isolated} if there exists an open set of it so that no other fixed point is in the set.  
Additionally, the performative stable condition in \cref{prop:existence} is equivalent to 
\begin{equation}\label{eq:kkt2}
    g^i_k(\vvtheta^*) = \bar{g}^i(\vvtheta^*)\text{ and }g^i_l(\vvtheta^*) \ge \bar{g}^i(\vvtheta^*), \text{ for all }i\in [n], k\in S_i,l\in \bar{S}_i.
\end{equation}
Therefore, the set of fixed points of \cref{eq:dy_general} contains the performative stable point.  

\subsection{Converging with Small Learning Rate}\label{sec:converge}
In this section, we show when the learning rate of each agent is small enough the the dynamics in \cref{eq:dy_general} converge to the performative stable point.  Specifically, if the parameter of multi-agent performative learning $(n,d,\mA, \bm{\lambda})$ is fixed, the dynamics in \cref{eq:dy_general} converge to performative stable point when $\bm{\eta}$ is ``small'' enough.


By \cref{eq:kkt2}, $\vvtheta^*$ is performative stable if the  gradient in the support is no less than the average gradient.  We call a performative stable point $\vvtheta^*$ \emph{proper} if the gradient of non-support coordinate is greater than the average gradient: for all $i\in [n]$ $l\in \bar{S}_i$, $g^i_l(\vvtheta^*)> \bar{g}^i(\vvtheta^*)$.
The below theorem shows if the performative stable point is proper and equilibria satisfying \cref{eq:fixed_pt} are isolated, \cref{eq:dy_general} converges when $\max_i \eta_i$ is small enough and the ratio of $\eta_i/\eta_j$ is bounded for all $i$ and $j$.  
\begin{theorem}[name = Convergence,label = thm:converge]
Given a constant $R_\eta>0$, and a multi-agent performative learning setting  with parameter $n,d,\mA, \vb,\bm{\lambda}$ where $\mA$ is positive definite, the performative stable point $\vvtheta^*$ is proper and its equilibria (defined in \cref{eq:fixed_pt}) are isolated, there exists $\eta_*>0$ so that dynamic in \cref{eq:dy_general} with learning rate profile $\bm{\eta}$ converges to the performative stable point, 
$\lim_{t\to \infty}\vvtheta_t = \vvtheta^*,$
if the initial state is an interior point and $\bm{\eta}$ satisfies $\max_i \eta_i\le \eta_*$ and $\max_i \eta_i/\min_i \eta_i\le R_\eta$.
\end{theorem}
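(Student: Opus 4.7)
The plan is to use the convex potential $\Phi$ from \cref{prop:existence} as a Lyapunov function for the Hedge orbit. The key algebraic identity $\partial\Phi/\partial\theta^i_k=\lambda_i\,g^i_k$ noted in that proof turns each agent's update into entropic mirror descent on the strictly convex $\Phi$ (strict convexity follows from $\mA\succ 0$ together with the block structure of $\nabla^2\Phi$), with effective step $\eta_i/\lambda_i$. The proof then splits into three steps: (i) a one-step decrease estimate for $\Phi$ along the dynamics; (ii) a LaSalle-type argument combined with the isolation hypothesis to conclude that the orbit converges to a single equilibrium of \eqref{eq:fixed_pt}; and (iii) identification of that limit with $\vvtheta^*$.

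For step (i), Taylor-expanding \eqref{eq:dy_general} gives, uniformly on the compact $\Theta^n$ and for $\eta_i$ small,
\begin{equation*}
\theta^i_{t+1,k}-\theta^i_{t,k}=-\eta_i\,\theta^i_{t,k}\bigl(g^i_k(\vvtheta_t)-\bar g^i(\vvtheta_t)\bigr)+\eta_i^2\,\theta^i_{t,k}\,r^i_k(\vvtheta_t)+O(\eta_i^3),
\end{equation*}
with $|r^i_k|=O((g^i_k-\bar g^i)^2)$ by the refined expansion of $\exp(-\eta g^i_k)/\sum_l \theta^i_l\exp(-\eta g^i_l)$. Pairing with $\nabla\Phi$ and adding the smoothness correction $\tfrac{L}{2}\|\vvtheta_{t+1}-\vvtheta_t\|^2$, where $L$ is a uniform bound on $\|\nabla^2\Phi\|$ on $\Theta^n$, yields
\begin{equation*}
\Phi(\vvtheta_{t+1})-\Phi(\vvtheta_t)\le -\sum_i \eta_i\lambda_i\,\Var_{\theta^i_t}\!\bigl(g^i(\vvtheta_t)\bigr)+ O\!\Bigl((\max_j\eta_j)^2\sum_i \Var_{\theta^i_t}(g^i)\Bigr).
\end{equation*}
Choosing $\eta_*$ small enough in terms of $L$, $\min_i\lambda_i$, and $R_\eta$ allows the first-order term to absorb the remainder by at least a factor of two, leaving the clean descent $\Phi(\vvtheta_{t+1})-\Phi(\vvtheta_t)\le -\tfrac12\sum_i\eta_i\lambda_i\Var_{\theta^i_t}(g^i)$. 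The ratio hypothesis $\max_i\eta_i/\min_i\eta_i\le R_\eta$ enters precisely here, to relate the decrease coefficient (proportional to $\min_i\eta_i\lambda_i$) to the smoothness coefficient $(\max_j\eta_j)^2$. I expect this to be the main technical hurdle, since the bound must hold uniformly, including in the near-equilibrium regime where the variance itself is small and the second-order error could otherwise dominate.

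For step (ii), telescoping the decrease of the bounded $\Phi$ gives $\sum_t \sum_i \eta_i\lambda_i\Var_{\theta^i_t}(g^i(\vvtheta_t))<\infty$, so $\Var_{\theta^i_t}(g^i(\vvtheta_t))\to 0$ for every $i$; by continuity every accumulation point of $(\vvtheta_t)$ satisfies the equilibrium condition \eqref{eq:fixed_pt}. The same bound forces $\|\vvtheta_{t+1}-\vvtheta_t\|\to 0$, so the $\omega$-limit set is connected; together with the isolation hypothesis this forces convergence to a single equilibrium $\vvtheta^{**}$.

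For step (iii), I show $\vvtheta^{**}=\vvtheta^*$. By \cref{prop:existence}, $\vvtheta^*$ is the unique performative stable point, so if $\vvtheta^{**}\ne\vvtheta^*$ then $\vvtheta^{**}$ violates \eqref{eq:kkt2} and there exist indices $i,l$ with $(\theta^{**})^i_l=0$ and $g^i_l(\vvtheta^{**})<\bar g^i(\vvtheta^{**})$. By continuity the multiplicative factor $\theta^i_{t+1,l}/\theta^i_{t,l}$ is bounded below by some $1+\delta>1$ throughout a neighborhood of $\vvtheta^{**}$; since the interior initial condition keeps $\theta^i_{t,l}>0$ for all $t$, this contradicts $\theta^i_{t,l}\to 0$. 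Properness of $\vvtheta^*$ plays the complementary local role of ensuring that each off-support coordinate of $\vvtheta^*$ is contracted by a factor strictly less than $1$ near $\vvtheta^*$, so that interior orbits that enter a small neighborhood of $\vvtheta^*$ genuinely settle there rather than stalling on the equilibrium manifold.
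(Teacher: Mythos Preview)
Your argument is correct and takes a genuinely different route from the paper. The paper argues in two stages: it first proves convergence of the continuous-time flow \eqref{eq:dy_conti} via the Lyapunov function $\Phi$ (\cref{thm:converge_conti}), then uses a Gronwall comparison between the discrete orbit and the ODE to show that \eqref{eq:dy_general} enters and remains in any prescribed neighborhood of $\vvtheta^*$ (\cref{thm:converge_const}); only inside that neighborhood does it invoke properness to force the off-support coordinates to decay geometrically (\cref{lem:normal}) and thereby obtain a variance-proportional remainder bound (\cref{lem:err_near}) that finally drives $\Phi$ down to its minimum (\cref{thm:converge_point}). Your step (i), by contrast, establishes a \emph{global} one-step descent $\Phi(\vvtheta_{t+1})-\Phi(\vvtheta_t)\le -c\sum_i\eta_i\lambda_i\Var_{\theta^i_t}(g^i)$ directly, with no ODE comparison. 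This is the sharper observation: both the second-order term of the exponential expansion and the smoothness correction $\tfrac{L}{2}\|\vvtheta_{t+1}-\vvtheta_t\|^2$ are indeed controlled by $C(\max_j\eta_j)^2\sum_i\Var_{\theta^i_t}(g^i)$ uniformly on $\Theta^n$, because after pairing with $\nabla\Phi$ and summing one gets $\sum_k\lambda_ig^i_k\,\eta_i^2\theta^i_kr^i_k=\tfrac{\lambda_i\eta_i^2}{2}\E_{\theta^i}[(g^i-\bar g^i)^3]$, and separately $\sum_k(\Delta\theta^i_k)^2\le C\eta_i^2\Var_{\theta^i}(g^i)$. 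The paper's \cref{lem:err_const} records only the cruder bound $|e^i_k|\le C$, which is why it is forced into the two-stage detour. Two small remarks: your pointwise claim $|r^i_k|=O((g^i_k-\bar g^i)^2)$ is not quite right---the actual coefficient is $r^i_k=\tfrac12\bigl((g^i_k-\bar g^i)^2-\Var_{\theta^i}(g^i)\bigr)$---though the summed estimate you need survives; and your steps (i)--(iii) never actually use properness, so the closing sentence about it is superfluous and your route in fact delivers the theorem without that hypothesis.
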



Informally, when the learning rate $\bm{\eta}$ are small,  $\vvtheta_t$ in \cref{eq:dy_general} can be approximated by a solution of an ordinary differential equation, $\vec{\bm{\vartheta}}(t\|\bm{\eta}\|_1)$, where the initial condition is $\vec{\bm{\vartheta}}(0) = \vvtheta_0$ and 
\begin{equation}\label{eq:dy_conti}
    \frac{d}{dt}\vartheta^i_{k}(t) = \frac{\eta_i}{\|\bm{\eta}\|_1}\vartheta^i_{k}(t)(\bar{g}^i(\vec{\bm{\vartheta}}(t))-g^i_k(\vec{\bm{\vartheta}}(t)))
\end{equation}
for all $t\ge 0, i\in [n]$, and $k\in [d]$.
Note that the set of fixed points of \cref{eq:dy_conti} is identical to \cref{eq:dy_general} and satisfies \cref{eq:fixed_pt}.  We will formalize this approximation in the proof of \cref{thm:converge_const}.  

The proof of \cref{thm:converge} has two parts.  First we show the continuous approximation~\cref{eq:dy_conti} converges to the performative stable point.   Then we study the relationship between \cref{eq:dy_general} and \cref{eq:dy_conti}, and prove the \cref{eq:dy_general} also converges to a performative stable point.

\subsubsection*{From Potential Function to Convergence of \Cref{eq:dy_conti}}
In this section, \cref{thm:converge_conti} shows the dynamics of \cref{eq:dy_conti} converge to performative stable point which will be useful to show the convergence of \cref{eq:dy_general}.

\begin{theorem}[label = thm:converge_conti]
If all points satisfying \cref{eq:fixed_pt} are isolated and $\mA$ is positive definite, and $\vec{\bm{\vartheta}}(0)$ is in the interior of $\Theta^n$, the limit $\lim_{t\to \infty} \vec{\bm{\vartheta}}(t) = \vvtheta^*$
is the performative stable point.
\end{theorem}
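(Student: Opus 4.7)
The plan is to exhibit the potential $\Phi$ from \cref{prop:existence} as a strict Lyapunov function for \cref{eq:dy_conti}, then combine LaSalle's invariance principle with a replicator-style ``folk theorem'' argument to rule out non-stable fixed points.

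First I would compute $d\Phi/dt$ along a trajectory. Using the identity $\partial \Phi/\partial \theta^i_k = \lambda_i g^i_k(\vvtheta)$ noted after \cref{prop:existence} and the chain rule,
\begin{align*}
\frac{d}{dt}\Phi(\vvvartheta(t))
&= \sum_{i,k}\lambda_i g^i_k(\vvvartheta)\cdot \frac{\eta_i}{\|\bm{\eta}\|_1}\vartheta^i_k\bigl(\bar g^i(\vvvartheta)-g^i_k(\vvvartheta)\bigr)\\
&= -\sum_{i}\frac{\lambda_i\eta_i}{\|\bm{\eta}\|_1}\Bigl(\sum_k \vartheta^i_k \bigl(g^i_k(\vvvartheta)\bigr)^2 - \bigl(\bar g^i(\vvvartheta)\bigr)^2\Bigr).
\end{align*}
Each bracketed term is the variance of $g^i_k$ under the probability vector $\vtheta^i$, so it is nonnegative by Jensen's inequality and vanishes iff $g^i_k = \bar g^i$ for every $k\in S_i$ and every $i$. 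By \cref{eq:fixed_pt}, this vanishing set is exactly the set of fixed points of \cref{eq:dy_conti}. Hence $\Phi$ is nonincreasing along trajectories and strictly decreasing off the fixed-point set.

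Second, the simplex $\Theta^n$ is compact and forward-invariant, and the interior is also forward-invariant (each $\vartheta^i_k$ satisfies $\dot{\vartheta}^i_k/\vartheta^i_k = \bar g^i - g^i_k$, so coordinates that start positive remain positive for all finite time). Since $\Phi$ is bounded below and monotonically decreasing along the trajectory, LaSalle's invariance principle implies the $\omega$-limit set $\omega(\vvvartheta(0))$ is a nonempty, compact, connected subset of the fixed-point set. By the isolation hypothesis, a connected subset of an isolated collection of points is a single point, so $\vvvartheta(t)$ converges to some fixed point $\hat{\vvtheta}$ of \cref{eq:dy_conti}.

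Third, I would rule out fixed points that are not performatively stable. Suppose for contradiction that $\hat{\vvtheta}$ satisfies \cref{eq:fixed_pt} but violates \cref{eq:kkt2}: there exist $i\in[n]$ and $l\in \bar S_i(\hat{\vvtheta})$ with $g^i_l(\hat{\vvtheta}) < \bar g^i(\hat{\vvtheta})$. By continuity there is a neighborhood $U$ of $\hat{\vvtheta}$ and $c>0$ with $\bar g^i(\vvvartheta) - g^i_l(\vvvartheta) \ge c$ on $U$. Since $\vvvartheta(t)\to \hat{\vvtheta}$, the trajectory eventually stays in $U$ after some $t_0$, and interior invariance gives $\vartheta^i_l(t_0)>0$. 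Integrating $\dot{\vartheta}^i_l/\vartheta^i_l \ge c$ yields $\vartheta^i_l(t)\ge \vartheta^i_l(t_0)e^{c(t-t_0)}\to\infty$, contradicting $\vartheta^i_l(t)\le 1$. Hence $\hat{\vvtheta}$ is performatively stable, and the uniqueness clause of \cref{prop:existence} (valid because $\mA$ is positive definite) forces $\hat{\vvtheta}=\vvtheta^*$.

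The main obstacle is the last step: we must be careful that the folk-theorem type bound is applied only after the trajectory has entered the neighborhood $U$, and that interior invariance genuinely gives $\vartheta^i_l(t_0) > 0$. Both follow from the multiplicative structure of \cref{eq:dy_conti}, but they rely on the trajectory starting in the interior, which is exactly the hypothesis of the theorem. The Lyapunov computation itself is clean once one notices that the bracketed quantity is a variance, which is the key identity that makes $\Phi$ serve as a potential both in the equilibrium analysis of \cref{prop:existence} and in the dynamical analysis here.
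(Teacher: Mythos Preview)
Your proof is correct and follows essentially the same route as the paper: both use $\Phi$ as a strict Lyapunov function (your variance identity is equivalent to the paper's \cref{lem:potential_conti}), combine the monotone decrease with the isolation hypothesis to get convergence to a single fixed point, and then rule out non-stable limits by the replicator ``folk theorem'' argument exploiting interior invariance. The only cosmetic difference is that the paper phrases the contradiction as ``$\vartheta^\imath_\kappa$ is positive and increasing yet must tend to $0$,'' whereas you integrate to get exponential growth; note that in your integration the rate should carry the factor $\eta_i/\|\bm{\eta}\|_1$ from \cref{eq:dy_conti}, though this does not affect the argument.
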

\Cref{thm:converge_conti} can be seen as the continuous version of \cref{thm:converge} with two subtlety. First \cref{thm:converge_conti} does not require that the performative stable point is proper.  Second, \cref{thm:converge_conti} also implicitly requires the ratio of learning rate between any two agents is bounded.

To prove \cref{thm:converge_conti}, we need \cref{lem:potential_conti} which shows that $\Phi$ in \cref{eq:potential_conti} is a potential function for \cref{eq:dy_conti} so that time derivative of $\Phi$ is negative for all non-fixed points.  Thus, the limit of \cref{eq:dy_conti} is a fixed point.  Furthermore, in the proof of \cref{thm:converge_conti}, we prove that the limit of \cref{eq:dy_conti} is performative stable when the initial condition $\vec{\bm{\vartheta}}(0)$ is an interior point.  The proof is similar to a proof in \cite{kleinberg2009multiplicative}, and is presented in \cref{app:conti}.

\begin{lemma}[label = lem:potential_conti]
Given a solution of \cref{eq:dy_conti}, the time derivative of $\Phi$ in \cref{eq:potential_conti} is $0$ at fixed points of \cref{eq:dy_conti}, and negative at all other points.  Furthermore, 
$$\frac{d}{dt}\Phi(\vec{\bm{\vartheta}}(t))\le \frac{-1}{2\sum_i \eta_i\sum_i \lambda_i\eta_i}\left(\sum_{i,k}\lambda_i \eta_i| \vartheta^i_k(\sum_l \vartheta^i_l g^i_l-g^i_k)|\right)^2\le \frac{-\min_i \lambda_i\eta_i}{2\sum_i \eta_i\sum_i \lambda_i\eta_i}\|\vvxi(\vvvartheta)\|_1^2.$$
\end{lemma}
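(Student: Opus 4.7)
The plan is to differentiate $\Phi$ along the flow \eqref{eq:dy_conti} via the chain rule, recognize the resulting expression as (minus) a weighted sum of variances, and then invoke Cauchy--Schwarz to convert it into the $\ell_1$-type bounds stated in the lemma.

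First I would compute the partial derivatives of $\Phi$. Because $\Phi$ is quadratic and $\vg^i(\vvtheta) = 2\mA(\vtheta^i + \sum_j \lambda_j\vtheta^j) - 2\vb$ by \eqref{eq:grad}, the symmetry of $\mA$ immediately gives $\partial \Phi/\partial \theta^i_k = \lambda_i\, g^i_k(\vvtheta)$, which is the identity already quoted in the sketch of \Cref{prop:existence}. Plugging this and \eqref{eq:dy_conti} into the chain rule and grouping terms by agent,
\begin{equation*}
\frac{d}{dt}\Phi(\vvvartheta) \;=\; \frac{1}{\|\bm{\eta}\|_1}\sum_i \lambda_i \eta_i \sum_k \vartheta^i_k\, g^i_k(\bar{g}^i - g^i_k) \;=\; -\frac{1}{\|\bm{\eta}\|_1}\sum_i \lambda_i \eta_i \sum_k \vartheta^i_k(g^i_k - \bar{g}^i)^2,
\end{equation*}
where the second equality uses $\bar{g}^i = \sum_k \vartheta^i_k g^i_k$ to collapse the inner sum to minus the variance of $g^i$ under $\vartheta^i$. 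This quantity is nonpositive, and vanishes exactly when $g^i_k = \bar{g}^i$ for every $i$ and every $k$ with $\vartheta^i_k>0$, i.e., at the fixed points of \eqref{eq:dy_conti} characterized by \eqref{eq:fixed_pt}. That establishes the qualitative first half of the lemma.

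For the quantitative bound, I would apply Cauchy--Schwarz with weights $a_{i,k} = \sqrt{\lambda_i \eta_i \vartheta^i_k}$ and $b_{i,k} = \sqrt{\lambda_i \eta_i \vartheta^i_k}\,|g^i_k - \bar{g}^i|$. Since $\sum_k \vartheta^i_k = 1$, we obtain $\sum_{i,k} a_{i,k}^2 = \sum_i \lambda_i \eta_i$, while $a_{i,k}b_{i,k} = \lambda_i \eta_i |\xi^i_k|$ (because $\xi^i_k = \vartheta^i_k(\bar{g}^i - g^i_k)$) and $\sum_{i,k} b_{i,k}^2$ is exactly the weighted variance appearing above. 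Cauchy--Schwarz then gives
\begin{equation*}
-\frac{d}{dt}\Phi(\vvvartheta) \;\ge\; \frac{1}{\|\bm{\eta}\|_1 \cdot \sum_i \lambda_i \eta_i}\Bigl(\sum_{i,k}\lambda_i\eta_i\,|\xi^i_k|\Bigr)^2,
\end{equation*}
which is the middle bound with slack to spare for the stated factor of $2$. The rightmost bound then follows immediately from the elementary inequality $\sum_{i,k}\lambda_i\eta_i|\xi^i_k| \ge (\min_i \lambda_i\eta_i)\,\|\vvxi\|_1$, squared.

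There is no substantive obstacle here; the calculation is a routine potential-function argument. The one step worth highlighting is the choice of Cauchy--Schwarz weights, which is essentially forced by the requirement that the square-root factors recombine to $\vartheta^i_k$ on one side (so the product reproduces $\xi^i_k$) and to the probability normalization $\sum_k \vartheta^i_k = 1$ on the other (so the Cauchy--Schwarz prefactor is exactly $\sum_i \lambda_i \eta_i$, with no stray dimension or $n$ factor).
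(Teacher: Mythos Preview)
Your proof is correct and follows essentially the same line as the paper's: chain rule, identify the result as a nonnegative weighted variance, then Cauchy--Schwarz. The one minor difference worth noting is that the paper first symmetrizes the variance into the pairwise form $\tfrac{1}{2}\sum_{k,l}\vartheta^i_k\vartheta^i_l(g^i_k-g^i_l)^2$, applies Cauchy--Schwarz over the triple index $(i,k,l)$, and then invokes the triangle inequality to collapse the $l$-sum back to $|\xi^i_k|$; this detour is exactly where the paper's factor of $2$ comes from. Your single-index Cauchy--Schwarz on $(i,k)$ avoids that step and directly yields the bound without the $2$, which is why you observe slack against the stated inequality. Either route is fine; yours is a touch cleaner and tighter.
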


\subsubsection*{From Approximation to Convergence of \Cref{eq:dy_general}}
Given the convergence of \cref{eq:dy_conti}, we show the dynamics of \cref{eq:dy_general} also converges to a performative stable point.   The argument has two stages.  First,  \cref{thm:converge_const} shows that given any neighborhood of performative stable points $D$, \cref{eq:dy_general} will hits $D$ and stay in $D$ in a finite number of of steps.  Then \cref{thm:converge_point} shows that \cref{eq:dy_general} converges to a performative stable point which completes the proof of \cref{thm:converge}.

\begin{lemma}[label = thm:converge_const]
Given any open set $D\in \Theta^n$ that contains the performative stable point, there exists $\eta_*$ small enough so that for all interior initial point $\vvtheta_0\in \Theta^n$, there exists $\tau$ so that $\vvtheta_t\in D$ for all $t\ge \tau$.
\end{lemma}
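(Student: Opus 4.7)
The plan is to combine a Lyapunov-invariance argument using the potential $\Phi$ from \cref{eq:potential_conti} with a Grönwall-type ODE approximation relating the discrete iterates of \cref{eq:dy_general} to the continuous flow \cref{eq:dy_conti}. For a suitable pair of nested sub-level sets of $\Phi$, the outer one is forward-invariant under \cref{eq:dy_general} once $\|\bm{\eta}\|_\infty$ is small; the ODE approximation, combined with \cref{thm:converge_conti}, then drives the iterates into the inner one in finitely many steps.

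\textbf{Nested sub-level sets and invariance.} Because $\mA$ is positive definite, $\Phi$ is strictly convex with unique minimizer $\vvtheta^*$ on the compact set $\Theta^n$ (\cref{prop:existence}). So for any open $D \ni \vvtheta^*$ we can pick levels $\Phi(\vvtheta^*) < c_1 < c_2$ with $K_i := \{\vvtheta \in \Theta^n : \Phi(\vvtheta) \le c_i\} \subset D$, and by the isolation hypothesis shrink $c_2$ further so that $\vvtheta^*$ is the only point satisfying \cref{eq:fixed_pt} inside $K_2$. A second-order Taylor expansion of the quadratic $\Phi$, combined with the expansion $\Delta\theta^i_k = \eta_i \xi^i_k(\vvtheta_t) + O(\eta_i^2)$ from the exponentiated update and the identity $\partial \Phi / \partial \theta^i_k = \lambda_i g^i_k$, gives
\begin{equation*}
\Phi(\vvtheta_{t+1}) - \Phi(\vvtheta_t) = -\sum_i \lambda_i \eta_i \Var_{\vtheta^i_t}(\vg^i(\vvtheta_t)) + O(\|\bm{\eta}\|_\infty^2).
\end{equation*}
The Jensen bound $\Var_{\vtheta^i}(\vg^i) \ge \|\vvxi^i\|_1^2$ underlying \cref{lem:potential_conti} then upgrades this to $\Phi(\vvtheta_{t+1}) - \Phi(\vvtheta_t) \le -c_3 \|\bm{\eta}\|_\infty \|\vvxi(\vvtheta_t)\|_1^2 + c_4 \|\bm{\eta}\|_\infty^2$, with constants depending only on $(n, \bm{\lambda}, \mA, \vb, R_\eta)$. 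On $K_2 \setminus K_1$, continuity of $\vvxi$ and the absence of other fixed points force $\|\vvxi\|_1 \ge \mu > 0$ for some $\mu$, so $\Phi$ strictly decreases there once $\|\bm{\eta}\|_\infty \le c_3 \mu^2 / (2 c_4)$; a separate crude bound $|\Phi(\vvtheta_{t+1}) - \Phi(\vvtheta_t)| = O(\|\bm{\eta}\|_\infty)$ ensures that a step starting in $K_1$ cannot overshoot $K_2$. Combining the two cases yields forward-invariance of $K_2$.

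\textbf{Finite-time entry via ODE approximation.} By \cref{thm:converge_conti}, from the interior initial condition $\vvtheta_0$ the continuous solution $\vvvartheta(\cdot)$ of \cref{eq:dy_conti} converges to $\vvtheta^*$, so we can fix $T$ with $\Phi(\vvvartheta(T)) < (c_1 + \Phi(\vvtheta^*))/2$. A standard Grönwall argument, using Lipschitzness of the right-hand side of \cref{eq:dy_conti} on the compact set $\Theta^n$ together with the bound $\max_i \eta_i / \min_i \eta_i \le R_\eta$ (which keeps the scaled weights $\eta_i/\|\bm{\eta}\|_1$ uniformly bounded away from zero), yields for every $\epsilon > 0$ an $\eta^\dagger > 0$ such that $\|\vvtheta_t - \vvvartheta(t\|\bm{\eta}\|_1)\| \le \epsilon$ for all $t$ with $t\|\bm{\eta}\|_1 \le T$ whenever $\|\bm{\eta}\|_\infty \le \eta^\dagger$. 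Taking $\epsilon$ small enough and $t^\star := \lceil T/\|\bm{\eta}\|_1 \rceil$ then places $\vvtheta_{t^\star} \in K_1$, and invariance of $K_2$ from the previous step gives $\vvtheta_t \in K_2 \subset D$ for all $t \ge t^\star$, proving the lemma with $\tau := t^\star$. The main obstacle is the two-scale balance in the invariance step: the Lyapunov decrease is linear in $\|\bm{\eta}\|_\infty$ while the discretization error is quadratic, so producing a strict decrease on the annulus $K_2 \setminus K_1$ relies crucially on the isolation hypothesis to deliver a uniform positive lower bound $\mu$ on $\|\vvxi\|_1$ there.
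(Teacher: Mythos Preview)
Your proposal is correct and follows the same two-stage strategy as the paper: a Gr\"onwall approximation of \cref{eq:dy_general} by the flow \cref{eq:dy_conti} (via \cref{thm:converge_conti}) to reach a small neighborhood of $\vvtheta^*$, followed by a discrete Lyapunov decrease of $\Phi$ (the paper packages this as \cref{lem:potential_dis} together with the error bound \cref{lem:err_const}) to trap the iterates there afterward.

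Your execution differs from the paper's in one useful respect. You work with nested sub-level sets $K_1\subset K_2\subset D$ of $\Phi$ and establish the lower bound $\|\vvxi\|_1\ge\mu$ only on the compact annulus $K_2\setminus K_1$, after first shrinking $K_2$ so that $\vvtheta^*$ is the sole equilibrium it contains; you also make the overshoot bound from $K_1$ to $K_2$ explicit. The paper instead chooses an inner set $D'\subset D$ and asserts $\min_{\vvtheta\notin D'}\|\vvxi(\vvtheta)\|_1>0$ on the full complement of $D'$ in $\Theta^n$, which as written would fail whenever other isolated fixed points exist outside $D'$---so your localization is the more careful use of the isolation hypothesis. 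Both arguments, however, share the same quantifier slippage: the Gr\"onwall threshold depends on the ODE hitting time $T$, which in turn depends on the initial point $\vvtheta_0$, whereas the lemma as stated promises a single $\eta_*$ valid for \emph{all} interior $\vvtheta_0$; neither your sketch nor the paper supplies the extra argument needed to make this uniform.
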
  
The proof is based on standard approximation result of numerical solution to ordinary differential equations, and is presented in \cref{app:const}.

\begin{lemma}[label = thm:converge_point]
Let $\vvtheta^*$ be the performative stable point and $D$ be a small enough neighborhood of $\vvtheta^*$.  In additional to the conditions in \cref{thm:converge}, if the initial condition of \cref{eq:dy_general},  $\vvtheta_0$, is in $D$ and $\vvtheta_t\in D$ for all $t\ge 0$, the limit of \cref{eq:dy_general} is the performative stable point 
$\lim_{t\to \infty} \vvtheta_t = \vvtheta^*.$
\end{lemma}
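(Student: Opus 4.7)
The plan is to use the function $\Phi$ from \cref{eq:potential_conti} as a discrete-time Lyapunov function, exploiting that \cref{lem:potential_conti} already shows $\Phi$ is strictly decreasing along the continuous flow \cref{eq:dy_conti}, whose set of fixed points coincides with that of \cref{eq:dy_general}. Because $\Phi$ is a quadratic polynomial in $\vvtheta$, its Hessian is constant and bounded, so the discrete second-order Taylor remainder is controllable; thus for $\eta_*$ small enough, the discrete update inherits the strict monotone-decrease property of $\Phi$. Combined with the hypothesis that equilibria of \cref{eq:dy_general} are isolated, shrinking $D$ so that $\vvtheta^*$ is the only fixed point in $\bar{D}$ then pins every subsequential limit down to $\vvtheta^*$.

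Concretely, using $\nabla_{\vtheta^i}\Phi(\vvtheta) = \lambda_i \vg^i(\vvtheta)$ (which follows directly from \cref{eq:grad}) and expanding the exponent in the multiplicative-weights update, one obtains
\[
    \Delta^i_{t,k} := \theta^i_{t+1,k} - \theta^i_{t,k} = \eta_i\,\xi^i_k(\vvtheta_t) + \tfrac{\eta_i^2}{2}\theta^i_{t,k}\bigl((g^i_k - \bar{g}^i)^2 - V_i(\vvtheta_t)\bigr) + O(\eta_i^3),
\]
where $V_i(\vvtheta) := \sum_k \theta^i_k(g^i_k - \bar{g}^i)^2$ is the gradient variance and the $O(\eta_i^3)$ is uniform in $\vvtheta \in \Theta^n$. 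Since $\Phi$ is quadratic, $\Phi(\vvtheta_{t+1}) - \Phi(\vvtheta_t) = \nabla\Phi(\vvtheta_t) \cdot \Delta_t + \tfrac{1}{2}\Delta_t^\top \nabla^2\Phi\, \Delta_t$ is exact, and the first-order contribution collapses to
\[
    \sum_{i,k}\lambda_i g^i_k \cdot \eta_i \xi^i_k \;=\; -\sum_i \eta_i \lambda_i V_i(\vvtheta_t),
\]
exactly mirroring the continuous-time decrease of \cref{lem:potential_conti}. All remaining contributions can be bounded on $\Theta^n$ by a constant times $(\max_i \eta_i)\sum_i \eta_i \lambda_i V_i(\vvtheta_t)$: the $\eta_i^2$-correction paired with $\nabla\Phi$ uses the uniform bound on $|g^i_k|$ over $\Theta^n$, while the Hessian contraction $\tfrac{1}{2}\Delta_t^\top \nabla^2\Phi\, \Delta_t$ uses the crude inequality $\|\vxi^i\|_2^2 \le V_i$ (from $\theta^i_k \le 1$) together with the ratio bound $\max_i \eta_i / \min_i \eta_i \le R_\eta$. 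For $\eta_*$ small enough this yields, uniformly on $\Theta^n$,
\[
    \Phi(\vvtheta_{t+1}) - \Phi(\vvtheta_t) \;\le\; -\tfrac{1}{2}\sum_i \eta_i \lambda_i V_i(\vvtheta_t) \;\le\; -c\,\|\vvxi(\vvtheta_t)\|_2^2,
\]
for a positive constant $c$ depending only on $(\bm{\lambda}, \bm{\eta}, R_\eta)$.

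Since $\Phi$ is continuous on the compact set $\Theta^n$ it is bounded below, so the monotone decrease above forces $\sum_{t \ge 0}\|\vvxi(\vvtheta_t)\|_2^2 < \infty$ and in particular $\vvxi(\vvtheta_t) \to 0$. By continuity of $\vvxi$, any accumulation point $\vvtheta^\infty \in \bar{D}$ then satisfies $\vvxi(\vvtheta^\infty) = 0$, which is precisely the equilibrium condition \cref{eq:fixed_pt}. Using the isolated-equilibria hypothesis of \cref{thm:converge}, we may shrink $D$ so that $\bar{D}$ contains no fixed point of \cref{eq:dy_general} besides $\vvtheta^*$; hence every accumulation point equals $\vvtheta^*$ and $\vvtheta_t \to \vvtheta^*$. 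The principal obstacle is the bookkeeping that certifies the higher-order Taylor contributions truly are dominated by the first-order decrease uniformly on $\Theta^n$: one must handle the mixed signs inside the $\eta_i^2$ correction to $\Delta^i_k$ (which contains both $(g^i_k - \bar{g}^i)^2$ and $-V_i$) together with the Hessian contraction, and rely on the elementary bound $\theta^i_k \le 1$ to pass from $V_i$ back down to $\|\vxi^i\|^2$ without losing positivity of the net estimate.
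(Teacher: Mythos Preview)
Your global strategy---establish a uniform discrete Lyapunov decrease for $\Phi$ on all of $\Theta^n$ and then pin down the limit via the isolated-equilibria hypothesis---is sound and in fact more direct than the paper's route. The paper proceeds in two stages: it first invokes the \emph{proper} assumption on $\vvtheta^*$ to force the off-support coordinates $\theta^i_{t,k}$, $k\in\bar S_i$, to decay geometrically (\cref{lem:normal}); only once those are small does it obtain the refined estimate $|e^i_k|\le C_3\|\vvxi\|_1^2$ (\cref{lem:err_near}) and hence the Lyapunov drop $\Phi(\vvtheta_{t+1})-\Phi(\vvtheta_t)\le -\epsilon_2\|\vvxi_t\|_1^2$ (\cref{lem:tangent}); convergence is then extracted by a separate $\epsilon$--$\delta$ trapping argument on sublevel sets of $\Phi$. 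Your approach bypasses the first stage entirely and, notably, does not seem to need the proper hypothesis at all.

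There is, however, a genuine gap in your bookkeeping. You expand
\[
\Delta^i_k=\eta_i\xi^i_k+\tfrac{\eta_i^2}{2}\theta^i_k\bigl((g^i_k-\bar g^i)^2-V_i\bigr)+O(\eta_i^3)
\]
with the $O(\eta_i^3)$ merely \emph{uniform} in $\vvtheta$, and then assert that all remaining contributions are dominated by $(\max_i\eta_i)\sum_i\eta_i\lambda_i V_i$. But the pairing $\sum_{i,k}\lambda_i g^i_k\cdot O(\eta_i^3)$, taken at face value, contributes a quantity of fixed order $\eta_*^3$ that does \emph{not} vanish as $\vvtheta_t\to\vvtheta^*$; since $V_i(\vvtheta_t)\to0$ there while $\eta_*$ is held fixed, this residual eventually swamps the first-order decrease $-\sum_i\eta_i\lambda_i V_i$, and your displayed inequality $\Phi(\vvtheta_{t+1})-\Phi(\vvtheta_t)\le -\tfrac12\sum_i\eta_i\lambda_i V_i$ breaks down precisely near the fixed point. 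The fix is easy and actually streamlines your argument: do not stop at second order. Using $0\le e^x-1-x\le\tfrac{e}{2}x^2$ for $|x|\le1$ together with $\sum_l\theta^i_l e^{\eta_i(\bar g^i-g^i_l)}\ge1$, one gets directly
\[
\bigl|\Delta^i_k-\eta_i\xi^i_k\bigr|\;\le\; \tfrac{e}{2}\,\eta_i^2\Bigl(\theta^i_k(\bar g^i-g^i_k)^2+2V_i\Bigr)\;\le\;\tfrac{3e}{2}\,\eta_i^2\,V_i,
\]
so the \emph{entire} remainder beyond $\eta_i\xi^i_k$ is $O(\eta_i^2 V_i)$, not merely $O(\eta_i^2)$. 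Feeding this into your exact quadratic expansion of $\Phi$ then legitimately yields, uniformly on $\Theta^n$ for $\eta_*$ small,
\[
\Phi(\vvtheta_{t+1})-\Phi(\vvtheta_t)\;\le\;-\tfrac12\sum_i\eta_i\lambda_i V_i(\vvtheta_t)\;\le\;-c\,\|\vvxi(\vvtheta_t)\|_2^2,
\]
and the rest of your argument (summability of $\|\vvxi_t\|_2^2$, accumulation points are equilibria, shrinking $D$) goes through unchanged.
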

The proof uses the fact that the right-hand side of \cref{eq:dy_general} decreases as the dynamic converges to the fixed point.  Therefore, even though the learning rate profile is fixed, the error between \cref{eq:dy_general} and \cref{eq:dy_conti} vanishes as they converge to the fixed point.  We present the proof in \cref{app:point}.

\subsection{Chaos with Constant Learning Rate}\label{sec:chaos}
Now we want to ask the converse question in \cref{sec:converge}. Do the dynamics in \cref{eq:dy_general} still converge, if the learning rate is fixed, as the number of agent increases?  Alternatively, do the dynamics converge when the agents have overwhelming influence on the data distribution?

\Cref{thm:chaos_one} shows that the dynamics is Li-York chaotic when $L_n = \sum_i \lambda_i$ is large even with $d = 2$.  Note that large $L_n$ may be due to a fixed number of agent which have overwhelming influence, or the number of agents is large.  The later case implies for any small $\eta$, {no matter how cautious the agents are,} as number of agent increases the chaos inevitably occurs.

\begin{theorem}[name = Chaos, label = thm:chaos_one]
Given a multi-agent induced location scale family distribution with $\mA$, $\vb$,  $d = 2$, and a common learning rate $\eta>0$, if all $n$ agents use the exponentiated gradient with a common learning rate $\eta$ in \cref{eq:dy_general} and $\mA$ is diagonally dominant, there exists {a carrying capacity $L^*$ 
 such that if 
$L_n = \sum_{i = 1}^n \lambda_i\ge L^*$} the dynamics $(\vvtheta_t)_{t\in \mathbb{N}}$ is Li-Yorke chaotic and thus has periodic orbits of all possible periods.
\end{theorem}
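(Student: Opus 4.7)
The plan is to exploit the symmetry of the dynamics to reduce the $n$-agent system to a one-dimensional map, then produce a period-three orbit of that map for large enough $L_n$, and finally invoke Li-Yorke's and Sharkovsky's theorems.

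\textbf{Step 1 (Symmetric invariant subspace).} Since $d=2$, I would parameterize agent $i$ by the scalar $p^i_t := \theta^i_{t,1}\in[0,1]$ so that $\vtheta^i_t = (p^i_t,\,1-p^i_t)$. Because all agents share the same learning rate $\eta$ and because from \cref{eq:grad} one has $\vg^i(\vvtheta) = 2\mA(\vtheta^i + \sum_j \lambda_j\vtheta^j) - 2\vb$, the updates for different agents coincide whenever $\vtheta^i = \vtheta^j$; hence the diagonal $\mathcal{S} := \{\vvtheta : p^1 = \cdots = p^n = p\}$ is forward-invariant under \cref{eq:dy_general}. On $\mathcal{S}$ the multi-agent dynamics collapse to a one-dimensional map $p_{t+1} = F_{L_n}(p_t)$ with $L := L_n = \sum_i\lambda_i$; substituting $\sum_j \lambda_j\vtheta^j = L(p,1-p)$ yields
\[
F_L(p) \;=\; \frac{p}{p + (1-p)\exp\!\bigl(2\eta[\gamma - (1+L)(\alpha p + \beta(1-p))]\bigr)},
\]
where $\alpha := A_{21}-A_{11}$, $\beta := A_{22}-A_{12}$, $\gamma := b_2 - b_1$.

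\textbf{Step 2 (Fixed point of $F_L$).} The interior fixed point $p^*_L$ is the unique root of $\gamma = (1+L)(\alpha p + \beta(1-p))$ in $(0,1)$. Diagonal dominance of $\mA$ forces $|\alpha|, |\beta|$ to be strictly less than the respective diagonal entries, guaranteeing $\alpha \neq \beta$ and fixing the sign pattern needed for $p^*_L \in (0,1)$ for all sufficiently large $L$. A quick differentiation shows $|F_L'(p^*_L)|$ grows linearly in $L$, so the fixed point becomes strongly repelling once $L$ is large.

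\textbf{Step 3 (Period-three orbit).} Following the strategy of \citet{chotibut2019route} I would exhibit three points $p_1 < p_2 < p_3$ in $(0,1)$ with $F_{L^*}(p_1) = p_2$, $F_{L^*}(p_2) = p_3$, $F_{L^*}(p_3) = p_1$ for a suitable threshold $L^*$. The obstacle flagged by the authors is that $p^*_L$ drifts with $L$, so one cannot use a single $L$-independent candidate cycle. My plan is: (i) change variables $q = p - p^*_L$ to re-center $F_L$ at its fixed point; (ii) exploit the fact that, because the exponent in $F_L$ scales with $1+L$, the map $F_L$ compresses mass toward $\{0,1\}$ very strongly on either side of $p^*_L$ as $L \to \infty$, producing a pronounced ``folding'' structure; and (iii) apply the intermediate value theorem to $F_L^{\,3}(p) - p$ on a small interval about $p^*_L$ to extract a zero that is not itself a fixed point, hence a genuine period-three point. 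The asymptotic collapse $F_L(p) \to 0$ or $1$ for $p$ bounded away from $p^*_L$ is what drives the required folding despite the moving fixed point.

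\textbf{Step 4 (Chaos and lifting).} Once a period-three orbit for the continuous map $F_{L^*}:[0,1]\to[0,1]$ is established, Li-Yorke's theorem delivers an uncountable scrambled set $S \subset [0,1]$ in the sense of \cref{def:chaos}, and Sharkovsky's theorem yields periodic orbits of every period. Lifting each $p\in S$ to the diagonal point $(p,1-p,\ldots,p,1-p)\in \Theta^n$ produces an uncountable scrambled set for the full $n$-agent system, because the restriction of \cref{eq:dy_general} to $\mathcal{S}$ is conjugate to $F_{L^*}$ and the distance between two diagonal lifts equals $|p-p'|$ (up to a constant). This gives the claimed Li-Yorke chaos of $(\vvtheta_t)_{t\in\mathbb{N}}$ and all periodic orbits.

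The principal obstacle is Step 3: the drifting fixed point $p^*_L$ blocks a verbatim transplant of the period-three existence proof of \citet{chotibut2019route}. I expect to spend most of the effort on quantitative estimates of $F_L$ and $F_L^{\,3}$ in a shrinking window around $p^*_L$, using the sign structure supplied by diagonal dominance and the exponential contraction away from $p^*_L$ to certify the required intermediate-value configuration.
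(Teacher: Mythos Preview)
Your overall architecture (Steps 1, 2, 4) matches the paper exactly: reduce to a one-dimensional map on the invariant diagonal, establish a period-three configuration, and invoke Li--Yorke (plus Sharkovsky for all periods). The formula you derive for $F_L$ agrees with the paper's $f_{\alpha(L),\beta(L)}$ once the symmetry of $\mA$ is used.

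The substantive difference is in Step~3. The paper does \emph{not} look for zeros of $F_L^{3}-\mathrm{id}$ near the fixed point. Instead it fixes a specific far-from-equilibrium point $x_1 := 1-1/\alpha(L)$, applies the intermediate value theorem to $F_L$ itself (not to $F_L^{3}$) on the interval $[\beta(L)/2,\beta(L)]$ to produce a preimage $x_0$ with $F_L(x_0)=x_1$, and then shows by direct estimates that $x_3 := F_L^{2}(x_1)$ drops below $\beta(L)/2$ for all large $L$. This yields the ordering $x_3<x_0<x_1$, which by the criterion of \cite{LI1982271} already forces a period-three orbit. The search is global across $[0,1]$, with the drifting $\beta(L)$ handled by working relative to its limit $\beta_\infty\in(0,1/2)$.

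Your proposed tactic---IVT on $F_L^{3}(p)-p$ in a \emph{small} interval about $p^*_L$---runs into a concrete obstruction: since $(F_L^{3})'(p^*_L)=(F_L'(p^*_L))^{3}\neq 1$, the only zero of $F_L^{3}-\mathrm{id}$ in any sufficiently small neighborhood of $p^*_L$ is $p^*_L$ itself, no matter how large $L$ is. Period-three orbits live in the large-amplitude folding region, not arbitrarily close to the repelling fixed point. To rescue your approach you would have to enlarge the window to essentially all of $[0,1]$ and then separate the genuine period-three root from the three fixed points $0$, $p^*_L$, $1$, which requires the same global orbit-tracking the paper carries out. The paper's route---pick an explicit $x_1$ near $1$ and chase its forward orbit---is both simpler and sidesteps this local pitfall.
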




To prove \cref{thm:chaos_one}, we show there exists an uncountable scrambled set defined in \cref{def:chaos}.  First we consider all agents start from the same initial model, and show the dynamics \cref{eq:dy_general} can be captured by the following function,
\begin{equation}\label{eq:f}
    f_{u,v}(x) = \frac{x}{x+(1-x)\exp(u(x-v))}\; \forall x\in [0,1]
\end{equation}
with proper choice of $u\in \R$ and $v\in \R$.  Note that $v$ is an equilibrium, and $u$ controls steepness.  First, when all agents start from the same initial model, $\vtheta^i_0 = \vtheta_0$ for all $i\in[n]$, and use the same learning rate $\eta$, their models are identical for all $t\ge 0$. For all $t$ there exists $\vtheta_t$ so that $\vtheta^i_t = \vtheta_t$ for all $i$.   
Additionally, since $d = 2$, we can use a single parameter to encode a linear predictive model with constraints.  Given $\vtheta\in \Theta = \Delta_2$, we define $p(\vtheta) = \theta_1\in [0,1]$ and omit the input $\vtheta$ when it is clear.  Thus, we can rewrite the dynamics as $p_t = p(\vtheta_t)$ which are one-dimensional dynamics on $[0,1]$, and 
$p_{t+1} = \frac{p_t}{p_t+(1-p_t)e^{\eta (g^i_1(\vvtheta_t)-g^i_2(\vvtheta_t))}}$.
We set 
\begin{align*}
    \alpha(L) =& 2\eta\left(1+L\right)(A_{1,1}-A_{1,2}-A_{2,1}+A_{2,2})\text{ and }    \beta(L) = \frac{(1+L)(A_{2,2}-A_{2,1})+(b_1-b_2)}{(1+L)(A_{1,1}-A_{1,2}-A_{2,1}+A_{2,2})},
\end{align*}
and write $\alpha_n = \alpha(L_n)$ and $\beta_n = \beta(L_n)$.  By direct computation the dynamic of $p_t$ is 
\begin{equation}\label{eq:one_symm}
    p_{t+1} = f_{\alpha_n,\beta_n}(p_t)
\end{equation}
where $\alpha_n$ encodes the update pace and $\beta_n$ is an equilibrium of the dynamic.
We further set $\beta_\infty := \frac{A_{2,2}-A_{2,1}}{A_{1,1}-A_{1,2}-A_{2,1}+A_{2,2}}$ and $\delta(L) := \beta(L)-\beta_\infty$ which converges to zero as $L\to \infty$.  If $\mA$ is diagonally dominant $\alpha(L)$ is positive and increasing  for all $L\ge 0$.  Additionally, because $A_{1,1}-A_{1,2}$ and $A_{2,2}-A_{2,1}$ are positive,  we can permute the coordinate so that $\beta_\infty\in (0,1/2]$.  {With \cite{LI1982271}, the lemma below implies the existence of period three points and the proof is in \cref{app:chaos}.}
\begin{lemma}[name = period three, label = lem:three]
If $\beta_\infty\in (0,1/2)$, there exists a constant $L^*>0$ so that if $L\ge L^*$, there exists a trajectory $x_0, x_1, x_2$, and $x_3$ with $f_{\alpha(L), \beta(L)}(x_i) = x_{i+1}$ such that $x_3<x_0<x_1$.  
\end{lemma}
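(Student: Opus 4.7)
The plan is to pick $x_0$ in a specific subinterval of $(0, \beta_\infty)$ and track three iterates of $f_{u,v}$ with $u = \alpha(L)$, $v = \beta(L)$, as $L \to \infty$. The preamble to the lemma already notes that $\alpha(L) > 0$ is increasing in $L$; inspecting the formula for $\alpha(L)$ shows $\alpha(L) \to \infty$, and $\beta(L) = \beta_\infty + \delta(L) \to \beta_\infty$. The hypothesis $\beta_\infty < 1/2$ makes the open interval $I := (\max\{0, 3\beta_\infty - 1\},\, \beta_\infty)$ non-empty (since $3\beta_\infty - 1 < \beta_\infty$), and I fix once and for all some $x_0 \in I$; for $L$ large, $x_0$ will also lie in $(\max\{0, 3v - 1\},\, v)$.

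With this choice I plan to derive three one-step asymptotic estimates, tracking whether each iterate sits below or above $v$ (which decides the sign of the exponent $u(x_t - v)$ in the denominator of $f_{u,v}$). Since $x_0 < v$, a short computation gives $1 - x_1 \sim \frac{1-x_0}{x_0}\, e^{-u(v - x_0)} \to 0$, so $x_1 \to 1$. Feeding this into $x_2 = f_{u,v}(x_1)$, the tiny factor $1 - x_1$ gets multiplied by the huge factor $e^{u(x_1 - v)} \sim e^{u(1-v)}$, yielding $x_2 \sim \frac{x_0}{1-x_0}\, e^{-u(1 + x_0 - 2v)}$; the exponent $1+x_0 - 2v$ is positive because $v < 1/2$, so $x_2 \to 0$ exponentially fast. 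For the third iterate, $x_2$ is now exponentially small, so $x_3 = f_{u,v}(x_2) \approx x_2/(x_2 + e^{-uv})$. The choice $x_0 > 3v - 1$ is precisely what forces $x_2 \ll e^{-uv}$, giving $x_3 \sim x_2\, e^{uv} \sim \frac{x_0}{1-x_0}\, e^{-u(1 + x_0 - 3v)} \to 0$.

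Combining the three estimates, for $L$ large enough $x_1$ is close to $1$ and $x_3$ is close to $0$, while $x_0$ remains fixed in the interior of $(0,1)$; this yields $x_3 < x_0 < x_1$, and the lemma follows by taking $L^*$ to be the first value beyond which all three asymptotic estimates are close enough to their limits.

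The main obstacle is the three-step exponential bookkeeping: controlling the approximation errors rigorously enough to guarantee that the three target exponents $v - x_0$, $1 + x_0 - 2v$, and $1 + x_0 - 3v$ are simultaneously positive and bounded away from zero. This is arranged by the specific choice of $x_0 \in I$. The left endpoint $\max\{0, 3v - 1\}$ is what keeps $1 + x_0 - 3v > 0$, and the condition $v < 1/2$ is what both keeps $1 + x_0 - 2v > 0$ and makes the interval $I$ non-empty in the first place; a naive choice of $x_0$ closer to $0$ would instead drive $x_3$ toward $1$ rather than $0$, breaking the desired ordering. The strict inequality $\beta_\infty < 1/2$ in the hypothesis is therefore genuinely necessary for this construction, not merely convenient.
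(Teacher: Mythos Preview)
Your argument is correct, and it follows a genuinely different route from the paper's. The paper fixes $x_1 := 1 - 1/\alpha(L)$ first (so $x_1$ depends on $L$), then uses the intermediate value theorem between $y=\beta(L)/2$ and $\beta(L)$ to locate a preimage $x_0$ with $f(x_0)=x_1$, and finally bounds $x_2,x_3$ forward from $x_1$ to get $x_3<y<x_0$. You instead fix $x_0$ once and for all in the interval $(\max\{0,3\beta_\infty-1\},\beta_\infty)$, independent of $L$, and push three iterates forward by asymptotic bookkeeping, isolating the three exponents $v-x_0$, $1+x_0-2v$, $1+x_0-3v$ whose positivity drives $x_1\to 1$, $x_2\to 0$, $x_3\to 0$.

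Your approach avoids the intermediate value step and makes transparent exactly why the hypothesis $\beta_\infty<1/2$ is needed: it is precisely what makes the interval $I$ non-empty and keeps the second and third exponents positive. The paper's construction, by contrast, gives slightly more explicit control over where the points sit (e.g.\ $x_0\in(\beta(L)/2,\beta(L))$), which could be useful if one wanted a quantitative $L^*$. One small point worth making explicit when you write it up: since $x_0$ is fixed strictly inside $I$ and $v\to\beta_\infty$, the three exponents converge to $\beta_\infty-x_0$, $1+x_0-2\beta_\infty$, $1+x_0-3\beta_\infty$, all strictly positive, so they are uniformly bounded away from zero for large $L$; this is what justifies treating the $o(1)$ errors from the $\sim$ manipulations uniformly.
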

\begin{proof}[Proof of \cref{thm:chaos_one}]
First by \cref{lem:three}, for all $L\ge L^*$, there exists a trajectory $x_0, x_1, x_2,x_3$ so that $x_3<x_0<x_1$.  Additionally, existence of such trajectory implies the exists of period three points by \cite{LI1982271}.  Finally, by the seminal work of \citet{li2004period}, it implies that the map is Li-York chaotic.  
\end{proof}

\section{Simulation}\label{sec:simulation}

\begin{figure}[thbp]

     \begin{subfigure}[b]{0.3\textwidth}
         \includegraphics[width=\textwidth]{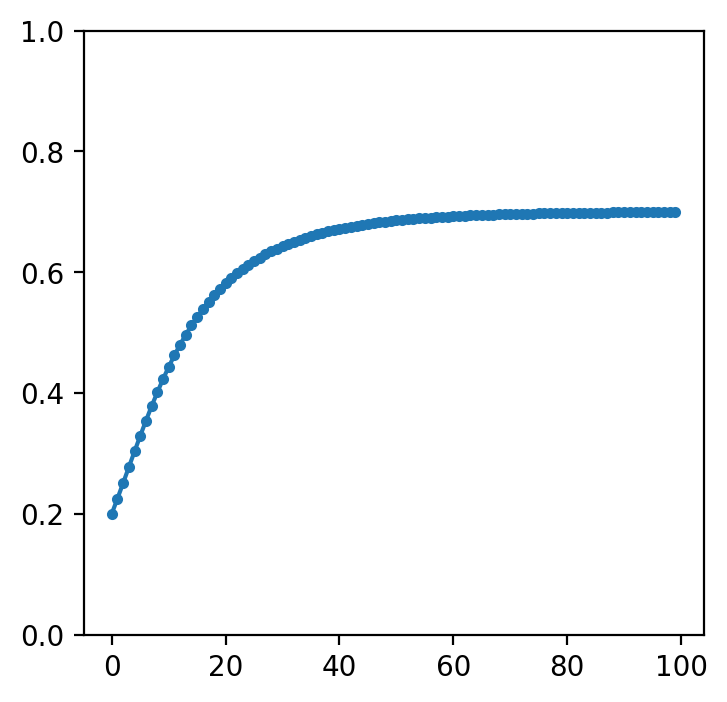}
         \caption{$(14, 0.001)$ without noise}
         \label{fig:140010}
     \end{subfigure}
     \begin{subfigure}[b]{0.3\textwidth}
         \includegraphics[width=\textwidth]{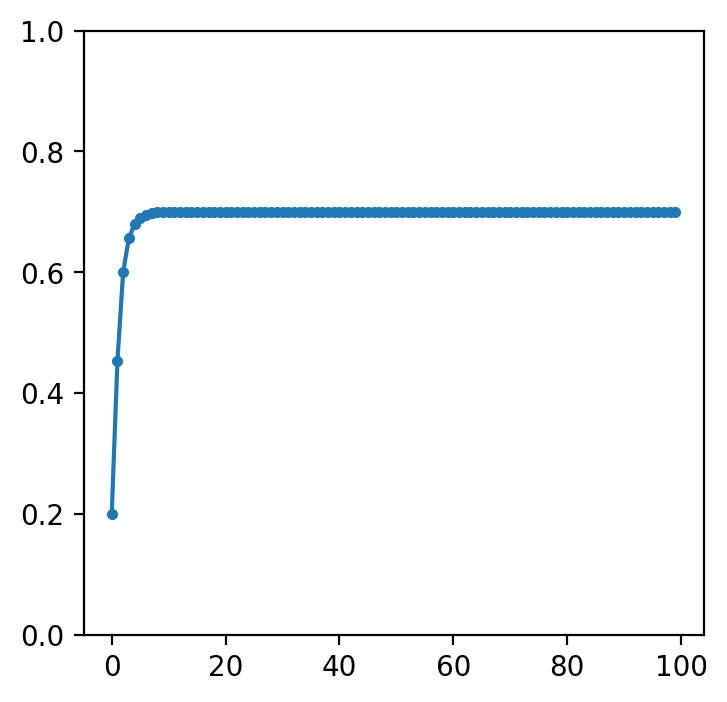}
         \caption{$(1.4, 0.05)$ without noise}
         \label{fig:1.4050}
     \end{subfigure}
     \begin{subfigure}[b]{0.3\textwidth}
         \includegraphics[width=\textwidth]{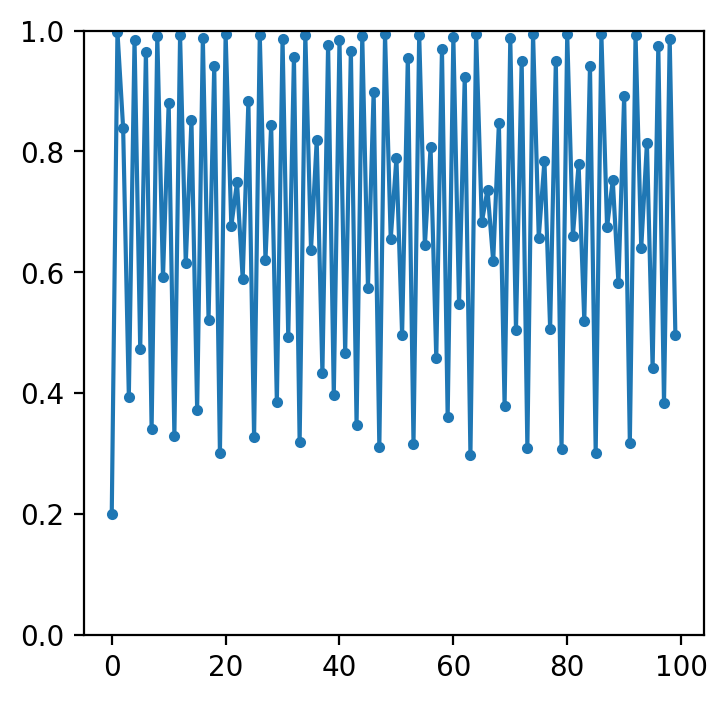}
         \caption{{$(14, 0.05)$ without noise}}
         \label{fig:14050}
     \end{subfigure}
     \newline
     \begin{subfigure}[b]{0.3\textwidth}
         \includegraphics[width=\textwidth]{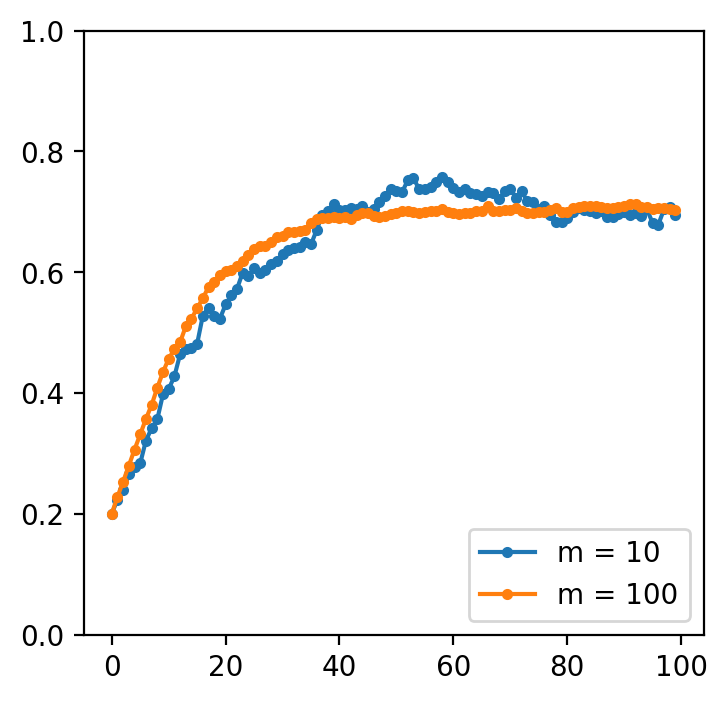}
         \caption{$(14, 0.001)$ with noise}
         \label{fig:140011}
     \end{subfigure}
     \begin{subfigure}[b]{0.3\textwidth}
         \includegraphics[width=\textwidth]{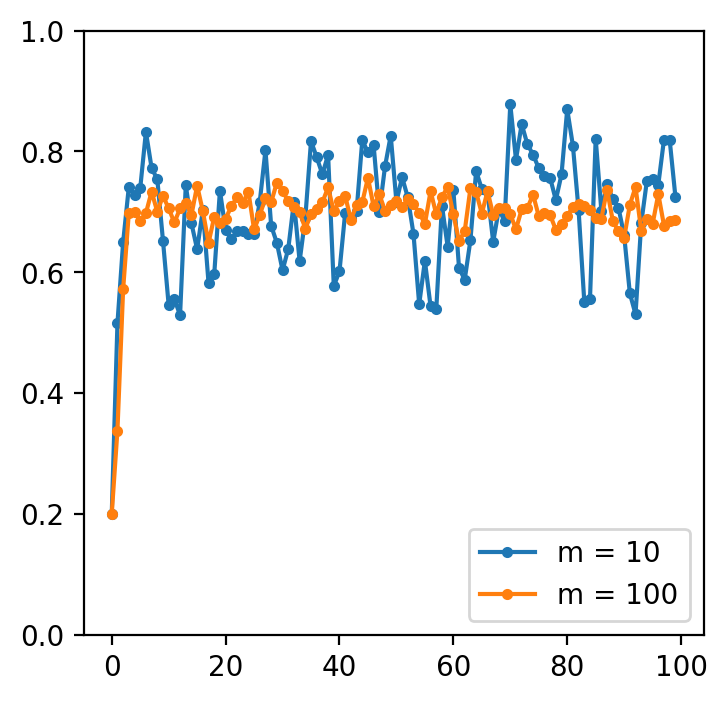}
         \caption{$(1.4, 0.05)$ with noise}
         \label{fig:1.4051}
     \end{subfigure}
     \begin{subfigure}[b]{0.3\textwidth}
         \includegraphics[width=\textwidth]{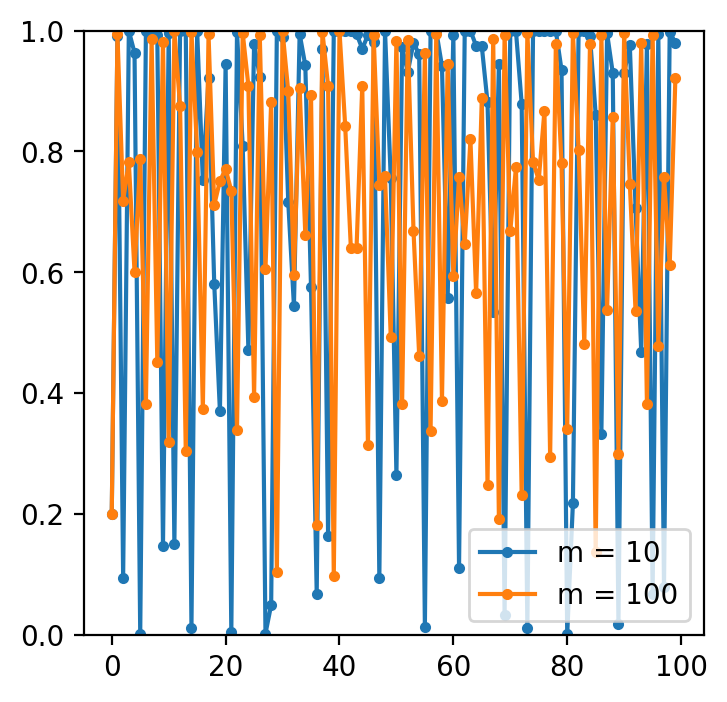}
         \caption{{$(14, 0.05)$ with noise}}
         \label{fig:14051}
     \end{subfigure}
     \caption{Here we plot the temporal behavior of $p_t$ starting at $p_0 = 0.2$ for $100$ rounds under various $L$ and $\eta$, and noisy estimation of gradient of mean squared error with $m$ samples. 
The top row (\cref{fig:14050,fig:140010,fig:1.4050}) present different combination of $(L,\eta)$, and bottom row (\cref{fig:14051,fig:140011,fig:1.4051}) consider the gradient is estimated from $m = 10$ and  $m = 100$ samples.}\label{fig:p}
\end{figure}
Now we simulate the one-dimensional dynamics in \cref{eq:one_symm} of one agent with different learning rate $\eta$ and collective influence $L$.

We first define a location-scale distribution map.  Let the feature $\rx_1, \rx_2$ and noise $\rx_0$ are mutually independent Gaussian distribution with zero mean, and the variance are $\E[\rx_1^2] = 3$, $\E[\rx_2^2] = 7$, and $\E[\rx_0^2] = 1$.  Finally, $\vtheta^0 = \vzero$.  Therefore, 
$$\mA = \E[\rvx\rvx^\top] = \begin{bmatrix}3&0\\0&7\end{bmatrix}, \text{ and }\vb = \mA\vtheta^0+\E[\rx_0\rvx] = \begin{bmatrix}0\\0\end{bmatrix}.$$
Given learning rate $\eta>0$, and collective influence $L$, we have $\alpha(L) = 20\eta(1+L)$, and the performative stable point $\beta(L) = 0.7$.

The top row of \cref{fig:p} shows the temporal behavior of $p_t$ under different $(L, \eta)$.  The bottom row in \cref{fig:p} demonstrates our (convergent and chaotic) results are robust even when the value of gradient is noisy. Specifically, we consider at each round $t$, instead of $g_1^i(\vvtheta_t)-g_2^i(\vvtheta_t) = 2(1+L)(1,-1)\mA\vtheta_t-2(1,-1)\vb$, the agent replace $\mA$ and $\vb$ with empirical moments on $m$ samples.  This process can be seen as a stochastic exponentiated gradient descent which uses a stochastic estimation of the gradient.  

We can see chaotic behavior happens in \cref{fig:14050} with $L = 14$ and $\eta = 0.05$, and such behavior persists in \cref{fig:14051} when the evaluations of gradient are noisy.  On the other hand, when the learning rate is small enough (\cref{fig:140010,fig:1.4050}) the dynamics converge to the performative stable point $0.7$.  Furthermore, in \cref{fig:140011,fig:1.4051} the dynamics converge even with noisy estimation of gradient.

Finally, \cref{fig:loss} shows the temporal behavior of the loss.  Under the same setting as \cref{fig:14050}, \cref{fig:chaos_loss} shows not only the temporal behavior $p_t$ is chaotic, but also the mean squared loss.  On the other hand, under the same setting as \cref{fig:140010}, \cref{fig:converge_loss} shows convergent behavior of loss.  

\begin{figure}[htbp]
     \centering
     \begin{subfigure}[b]{0.3\textwidth}
         \includegraphics[width=\textwidth]{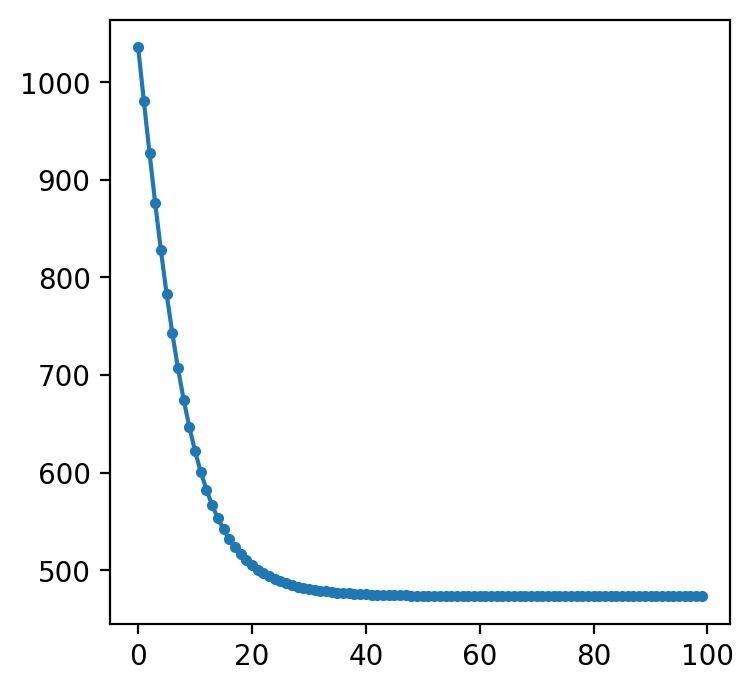}
         \caption{{$(14, 0.001)$ without noise}}
         \label{fig:converge_loss}
     \end{subfigure}
     \begin{subfigure}[b]{0.3\textwidth}
         \includegraphics[width=\textwidth]{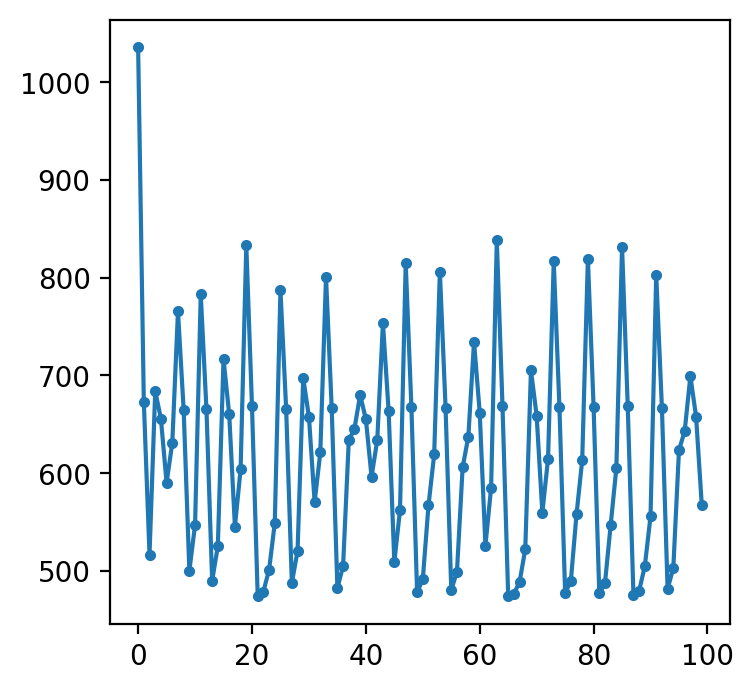}
         \caption{{$(14, 0.05)$ without noise}}
         \label{fig:chaos_loss}
     \end{subfigure}
     \caption{The temporal behavior of the total cost for $100$ rounds under various $\eta$.}\label{fig:loss}
\end{figure}
\section{Conclusion}
We introduce a framework of multi-agent performative prediction and investigate whether classical reinforcement learning algorithms can converge or behave chaotically depending on the collective influence of the agents model and learning rate. However, we view our example as only scratching the surface of the work of multi-agent performative predictions. 

Our framework leads to several new theoretical problems.  In particular, it would be interesting to understand whether this threshold is generic and if our results still hold on other reinforcement learning algorithms or other general multi-agent distribution maps. 
Our framework also provides a new viewpoint to several applications.  One natural application is strategic classification.~\cite{hardt2016strategic}  In this context, our framework can be seen as strategic learners in strategic classification problem where both data and classifiers are strategic.  However, the features also respond to the deployed models in conventional strategic classification, which is not captured in our multi-agent location-scale distribution map.  It would be interesting to investigate our dynamics in the context of strategic prediction.  Another application is pricing strategy/prediction, where multiple companies predict the demand function and set their prices.

{Both our digital and real-world environments are increasingly under the influence of or ever more powerful and numerous ML systems. As these algorithms trigger actions that change the state of the system that produces their joint input data (e.g., AI-triggered ads shown to users affecting their behavior, or automated trading systems affecting stock prices, etc), they are effectively forming closed loop systems and can no longer be understood fully in isolation. As we show in this paper, even fairly simple and innocuous such settings can exhibit phase transitions going from optimal behavior to chaos. We believe such phenomena are worthy of a careful investigation and we hope that this paper sets out some useful building blocks by bringing together optimization/performance analysis with Lyapunov theory and chaos theory.}

\bibliography{reference}
\bibliographystyle{iclr2022_conference}
\clearpage
\appendix
\section{Proofs and Details in Section~\ref{sec:equilibrium}}\label{app:equilibrium}

\begin{proof}[Proof of \cref{prop:existence}]
First under the mean squared loss function, each agent $i$'s decoupled performative loss function $\ell(\vvtheta', \vtheta^i)$ is convex in $\vtheta^i$.   Thus, for linear predictive model, we can apply the {KKT conditions} on \cref{def:stable} so that a collection of predictive models $\vvtheta\in \Theta^n$ is performatively stable if and only if for all $i\in [n]$, $k\in [d]$ with $\theta^i_k>0$, $\frac{\partial}{\partial \theta^i_k}\ell(\vvtheta', \vtheta^i)\le \frac{\partial}{\partial \theta^i_l}\ell(\vvtheta', \vtheta^i)$ 
for all $l\in [d]$ where $\vvtheta' = \vvtheta$.  Therefore, with \cref{eq:grad}, $\vvtheta^*$ is performative stable if \begin{equation}\label{eq:kkt}
    g^i_k(\vvtheta^*)\le g^i_l(\vvtheta^*)
\end{equation}
holds for all $i\in [n]$ and $k\in [d]$ with $\theta^i_k>0$.  

With \cref{eq:kkt}, we now show that there exists a unique performative stable point $\vvtheta^*$ by proving that 1) $\Phi$ in \cref{eq:potential_conti} is strictly convex, and 2) $\vvtheta^*$ is a minimizer of $\Phi$.  First, to show $\Phi$ is strictly convex, it is sufficient to show the Hessian of $\Phi$ positive definite.  Because $\Phi$ is a quadratic function on $\Theta^n$, the Hessian of $\Phi$ is a constant matrix in $\R^{nd \times nd}$  By the partial derivative of \cref{eq:potential_conti}, for all $i,j\in [n]$ and $k,l\in[d]$, we have $(\nabla^2 \Phi)_{ik,jl} = \frac{\partial^2}{\partial \theta^i_k \partial \theta^j_l}\Phi =  2\lambda_i\lambda_j A_{k,l}$ if $i\neq j$ and $(\nabla^2 \Phi)_{ik,il} = 2\lambda_i(\lambda_i+1) A_{k,l}$.  Let $\mL\in \R^{n\times n}$ with $L_{ij} = 2\lambda_i\lambda_j$ if $i\neq j$ and $2\lambda_i(1+\lambda_i)$ which is positive definite because $\lambda_i>0$ for all $i\in[n]$.  Then, the  Hessian of $\Phi$ is the Kronecker product of $\mL$ and $\mA$,
$$\nabla^2 \Phi = \begin{pmatrix}2\lambda_1(1+\lambda_1) \mA&\dots &2\lambda_1\lambda_n \mA\\
\vdots&\ddots&\vdots\\
2\lambda_n\lambda_1 \mA&\dots &2\lambda_n(1+\lambda_n) \mA\end{pmatrix} = \mL\otimes\mA.$$
Because $\mL$ and $\mA$ are both positive definite, the Hessian is also positive definite.  Therefore, $\Phi$ is strictly convex, and there exist a unique minimum of $\Phi$ in the compact set $\Theta^n$.
Now we show $\vvtheta^*$ is performative stable if and only if $\vvtheta^*$ is a minimizer of $\Phi$.   By the first order condition and the partial derivative of \cref{eq:potential_conti}, the minimum of \cref{eq:potential_conti} at $\vvtheta^*$ if and only if $g^i_k(\vvtheta^*) \le g^i_l(\vvtheta^*)$ for all $i\in[n]$ $k,l\in [d]$ with $(\theta^*)^i_k>0$.  Therefore, $\vvtheta^*$ is the minimum of $\Phi$ if and only if $\vvtheta^*$ is a performative stable point.  
\end{proof}

\begin{proof}[Proof of \cref{prop:stable2optimal}]
Given a profile of models $\vvtheta$, the total cost is 
\begin{align*}
    \sum_i \ell(\vvtheta, (\vtheta)^i) =& \sum_i \E_{(\rvx,\ry)\sim \mathcal{D}(\vvtheta)}[(\ry-\vtheta^i\cdot \rvx)^2]\\
    =& \sum_i \E\left[\left(\left\langle\rvx, \vtheta^0-\sum_j \lambda_j \vtheta^j-\vtheta^i\right\rangle+x_0\right)^2\right]
\end{align*}
which is a convex function on $\vvtheta\in \Theta^n$. Additionally,
$$\frac{\partial}{\partial \theta^i_k} \sum_\imath \ell(\vvtheta, \vtheta^\imath) = (1+\lambda_i) g^i_k(\vvtheta, \vvtheta)+\sum_{\imath\in [n]:\imath\neq i}\lambda_i g^i_k(\vvtheta, \vvtheta) =  (1+n\lambda_i) g^i_k(\vvtheta, \vvtheta)$$
which is the gradient of decoupled performative loss scaled by $(1+n\lambda_i)$.  Thus, we can apply the {KKT conditions} and the minimum happens if and only if \cref{eq:kkt} holds.
\end{proof}

\section{Proof and Details for Theorem~\ref{thm:converge_conti}}\label{app:conti}
\begin{proof}[Proof of \cref{lem:potential_conti}]
By chain rule,
\begin{equation}\label{eq:potential_conti1}
    \frac{d}{dt}\Phi(\vec{\bm{\vartheta}}(t)) = \sum_{i\in [n], k\in [d]}\frac{\partial}{\partial \theta^i_k}(\vec{\bm{\vartheta}}(t))\cdot\frac{d}{dt} \vec{\bm{\vartheta}}(t),
\end{equation}
so the time derivative of the potential function is zero if the dynamics is at a fixed point of \cref{eq:dy_conti}.  

Now we compute a closed form of the time derivative and show the derivative is zero only if the dynamics is at a fixed point of \cref{eq:dy_conti}.  Here we omit the input $t$ to simplify the notation.
\begin{align*}
    \frac{d}{dt}\Phi(\vec{\bm{\vartheta}}(t))
    =& \sum_{i,k}\frac{\partial}{\partial \theta^i_k}(\vec{\bm{\vartheta}}(t))\cdot \frac{\eta_i}{\|\bm{\eta}\|_1}\vartheta^i_{k}(t)(\sum_l \vartheta^i_l(t) g^i_l(\vec{\bm{\vartheta}}(t))-g^i_k(\vec{\bm{\vartheta}}(t)))\tag{by \cref{eq:dy_conti,eq:potential_conti1}}\\
    =& \frac{1}{\|\bm{\eta}\|_1}\sum_i \lambda_i\eta_i\left(\sum_{k}\left(g^i_k\right)\cdot \vartheta^i_{k}\left(\sum_l \vartheta^i_l g^i_l-g^i_k\right)\right)\tag{by the partial derivative of \cref{eq:potential_conti}}\\
    =& \frac{1}{2\|\bm{\eta}\|_1}\sum_i \lambda_i\eta_i\sum_{k,l\in [d]}\vartheta^i_{k}\vartheta^i_l (2g^i_kg^i_l-2(g^i_k)^2)\tag{because $\sum_l \vartheta^i_l = 1$}\\
    =& \frac{-1}{2\|\bm{\eta}\|_1}\sum_i \lambda_i\eta_i\sum_{k,l\in [d]}\vartheta^i_{k}\vartheta^i_l (g^i_k-g^i_l)^2
\end{align*}
 Now we bound the time derivative.
\begin{align*}
    &-2\|\bm{\eta}\|_1\left(\sum_i \lambda_i\eta_i\right)\frac{d}{dt}\Phi(\vec{\bm{\vartheta}}(t))\\
    =&  \left(\sum_{i\in[n], k,l\in [d]} \lambda_i\eta_i\vartheta^i_k\vartheta^i_l\right)\left(\sum_{i\in [n],k,l\in [d]} \lambda_i\eta_i\vartheta^i_{k}\vartheta^i_l (g^i_k-g^i_l)^2\right)\tag{$\sum_k\vartheta^i_k = \sum_l\vartheta^i_l = 1$}\\
    \ge &\left(\sum_{i\in [n],k,l\in [d]} \lambda_i\eta_i\vartheta^i_{k}\vartheta^i_l |g^i_k-g^i_l|\right)^2.\tag{by  Cauchy inequality}\\
    \ge&
    \left(\sum_{i\in [n],k\in [d]} \lambda_i\eta_i\left|\vartheta^i_{k}\sum_{l\in [d]}\vartheta^i_l (g^i_k-g^i_l)\right|\right)^2\tag{by triangle inequality}
\end{align*}
Therefore, $\frac{d}{dt}\Phi(\vec{\bm{\vartheta}}(t)) = 0$ only if $\vartheta^i_{k}\left(g^i_k-\sum_{l\in [d]}\vartheta^i_l g^i_l\right) = 0$ for all $i\in [n]$ and $k\in [d]$ which is a fixed point of \cref{eq:dy_conti}. 

We can further simplify the bound by $\vvxi$.  Because $\sum_{i,k}\lambda_i \eta_i| \vartheta^i_k(\sum_l \vartheta^i_l g^i_l-g^i_k)|\ge \left(\min_i \lambda_i\eta_i\right)\sum_{i,k}|\xi^i_k(\vvvartheta)| = \left(\min_i \lambda_i\eta_i\right)\|\vvxi(\vvvartheta)\|_1$, 
$\frac{d}{dt}\Phi(\vec{\bm{\vartheta}}(t))\le \frac{-\min_i \lambda_i\eta_i}{2\sum_i \eta_i\sum_i \lambda_i\eta_i}\|\vvxi(\vvvartheta)\|_1^2$.
\end{proof}
\begin{proof}[Proof of \cref{thm:converge_conti}]
When the fixed points are isolated satisfying \cref{eq:fixed_pt}, by \cref{lem:potential_conti}, the dynamics in \cref{eq:dy_conti} converges to a fixed point of \cref{eq:dy_conti}, $\vvtheta^*\in \Theta^n$.  Because $\vvtheta^*$ is a fixed point, for all $i\in[n], k\in [d]$,
$$(\theta^*)^i_k\left(\bar{g}^i(\vvtheta^*)-g^i_k(\vvtheta^*)\right) = (\theta^*)^i_k\sum_l (\theta^*)^i_lg^i_l(\vvtheta^*)-g^i_k(\vvtheta^*) = 0$$
By \cref{prop:existence}, the fixed point condition implies each coordinate of the gradient of loss in the support are identical, $g^i_k(\vvtheta^*) = \bar{g}^i(\vvtheta^*)$ for all $i$ and $k\in S_i$.   Thus, if the fixed point $\vvtheta^*$ is not performative stable, there exists $\imath$ and $\kappa$ with $(\theta^*)^\imath_\kappa = 0$ so that $g^\imath_\kappa(\vvtheta^*)<\bar{g}^\imath(\vvtheta^*)$.  Furthermore, $\exp(-\eta_\imath g^\imath_\kappa(\vvtheta^*))>\sum_l (\theta^*)^\imath_l \exp(-\eta_\imath g^\imath_l(\vvtheta^*))$.   We can pick a small enough $\epsilon>0$ and define  
\begin{equation}\label{eq:nash_conti1}
    U_\epsilon:=\{\vvtheta:\exp(-\eta_\imath g^\imath_\kappa)>\sum_l \theta^\imath_l\exp(-\eta_\imath g^\imath_l)+\epsilon\}
\end{equation} which contains $\vvtheta^*$ and is an open set because $\vec{\vg}$ and the exponential function are continuous.  
Since $\vec{\bm{\vartheta}}(t)$ converges to $\vvtheta^*$ as $t\to \infty$, there exists a time $t_\epsilon$ so that for all $t\ge t_\epsilon$, $\vec{\bm{\vartheta}}(t)\in U_\epsilon$.  However, if $\vec{\bm{\vartheta}}(t)\in U_\epsilon$ and $\vec{\bm{\vartheta}}(t)$ is an interior point, we get $\frac{d}{dt}\vartheta^i_{k}(t)>0$ by \cref{eq:dy_conti,eq:nash_conti1}.  Therefore, $\vartheta^i_k(t)$ is positive and increasing for  $t\ge t_\epsilon$.  We reached a contradiction because $\vartheta^i_k(t)\to (\theta^*)^i_k = 0$.  Therefore, $\vvtheta^*$ is a performative stable point.  
\end{proof}

\section{Proof ans Details for Theorem~\ref{thm:converge_const}}\label{app:const}
To prove \cref{thm:converge_const}, show the dynamic \cref{eq:dy_general} can be approximated by \cref{eq:dy_conti} and the error vanishes as $\eta_*$ decreases.  Thus, we can show the dynamic can hit an arbitrary neighborhood of the performative stable point.  We further use $\Phi$ to show the dynamic will stay in the neighborhood.  Below we state two ancillary lemmas to control the error of our approximation.

We define an error vector $\vve(\vvtheta)\in \R^{n\times d}$ between \cref{eq:dy_general,eq:dy_conti} so that
$$\eta_i^2 e^i_k(\vvtheta):= \frac{\theta^i_k\exp(\eta_i(\bar{g}^i- g^i_k))}{\sum_{l}\theta^i_l\exp(\eta_i( \bar{g}^i-g^i_l))} -\theta^i_k-\eta_i\xi^i_k$$
We omit the input $\vvtheta$ of $\vve$ and $\vve_t:= \vve(\vvtheta_t)$ when there is no ambiguity.  We show if the the error vector is small, $\Phi$ is decreasing in dynamics \cref{eq:dy_general}.  
\begin{claim}[name = Approximated potential, label = lem:potential_dis]
There exist $C_4$, so that for all $t$ and $\vvtheta_0\in \Theta^n$, the difference of \cref{eq:potential_conti} on \cref{eq:dy_general} satisfies
$$\Phi(\vvtheta_{t+1})-\Phi(\vvtheta_t)\le \frac{-\min_i \lambda_i^2\eta_i^2}{2\sum_i \lambda_i\eta_i}\|\vvxi_t\|_1^2+\max_i\lambda_i\eta_i^2\max_{i,k,\vvtheta}|g^i_k(\vvtheta)|\cdot \|\vve_t\|_1+d^2C_4\left\| \vvtheta_{t+1}-\vvtheta_t\right\|^2_1.$$
\end{claim}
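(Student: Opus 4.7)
The plan is to exploit the fact that $\Phi$ in \cref{eq:potential_conti} is quadratic in $\vvtheta$, so Taylor's theorem gives the exact identity
\[
\Phi(\vvtheta_{t+1})-\Phi(\vvtheta_t)\;=\;\nabla\Phi(\vvtheta_t)^\top \Delta_t\;+\;\tfrac{1}{2}\Delta_t^\top \nabla^2\Phi\,\Delta_t,
\]
with $\Delta_t:=\vvtheta_{t+1}-\vvtheta_t$. The Hessian was computed in the proof of \cref{prop:existence} to be the constant matrix $\mL\otimes\mA$, whose entries are uniformly bounded in terms of $\bm{\lambda}$ and $\max_{k,l}|A_{k,l}|$. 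Using $|\Delta_t^\top M\Delta_t|\le(\max_{ij}|M_{ij}|)\|\Delta_t\|_1^2$ and absorbing constants into $C_4$ yields the $d^2C_4\|\Delta_t\|_1^2$ tail of the claim (any extra $d$-dependence is slack absorbed into $C_4$).

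For the linear part I substitute $\partial\Phi/\partial\theta^i_k=\lambda_i g^i_k(\vvtheta)$, derived in \cref{prop:existence}, together with the defining decomposition of $\vve$, namely $\theta^i_{t+1,k}-\theta^i_{t,k}=\eta_i\xi^i_k(\vvtheta_t)+\eta_i^2 e^i_k(\vvtheta_t)$, to obtain
\[
\nabla\Phi(\vvtheta_t)^\top\Delta_t\;=\;\sum_i\lambda_i\eta_i\sum_k g^i_k\xi^i_k\;+\;\sum_i\lambda_i\eta_i^2\sum_k g^i_k e^i_k.
\]
The second sum is bounded by $\max_i\lambda_i\eta_i^2\cdot\max_{i,k,\vvtheta}|g^i_k(\vvtheta)|\cdot\|\vve_t\|_1$ through the triangle inequality, matching the middle term of the claim. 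For the first sum I mirror the computation in \cref{lem:potential_conti}: since $\bar{g}^i=\sum_k\theta^i_kg^i_k$ and $\sum_k\theta^i_k=1$, one has $\sum_k g^i_k\xi^i_k=-\sum_k\theta^i_k(g^i_k-\bar{g}^i)^2$, and Cauchy--Schwarz applied to $\|\xi^i\|_1=\sum_k\theta^i_k|g^i_k-\bar{g}^i|$ yields $\|\xi^i\|_1^2\le\sum_k\theta^i_k(g^i_k-\bar{g}^i)^2$, so this dissipation piece is at most $-\sum_i\lambda_i\eta_i\|\xi^i\|_1^2$.

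The main obstacle, and the step that must be done carefully, is threading a second weighted Cauchy--Schwarz to reach the exact coefficient $\min_i\lambda_i^2\eta_i^2/(2\sum_i\lambda_i\eta_i)$ in front of $\|\vvxi_t\|_1^2=(\sum_i\|\xi^i\|_1)^2$. Writing $\|\xi^i\|_1=\sqrt{\lambda_i\eta_i}\cdot\|\xi^i\|_1/\sqrt{\lambda_i\eta_i}$ and applying Cauchy--Schwarz gives $\|\vvxi_t\|_1^2\le(\sum_i\lambda_i\eta_i)\cdot\sum_i\|\xi^i\|_1^2/(\lambda_i\eta_i)$, whence
\[
\sum_i\lambda_i\eta_i\|\xi^i\|_1^2=\sum_i(\lambda_i\eta_i)^2\cdot\frac{\|\xi^i\|_1^2}{\lambda_i\eta_i}\ge(\min_i\lambda_i\eta_i)^2\sum_i\frac{\|\xi^i\|_1^2}{\lambda_i\eta_i}\ge\frac{(\min_i\lambda_i\eta_i)^2}{\sum_i\lambda_i\eta_i}\|\vvxi_t\|_1^2.
\]
Since $(\min_i\lambda_i\eta_i)^2=\min_i\lambda_i^2\eta_i^2$ (all weights positive), the desired coefficient follows, with the factor $2$ in the denominator being pure slack. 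The only real care in the argument is to balance the weights $\lambda_i\eta_i$ across the two Cauchy--Schwarz applications so that the exponents on $\lambda_i\eta_i$ and the denominator $\sum_i\lambda_i\eta_i$ land in precisely the form stated; everything else is routine Taylor expansion plus triangle/Cauchy inequalities.
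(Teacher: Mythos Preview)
Your proposal is correct and follows essentially the same route as the paper: Taylor-expand the quadratic $\Phi$, bound the Hessian by the explicit constant $C_4=\max_i\lambda_i(\lambda_i+1)\max_{k,l}A_{k,l}$, plug in $\partial\Phi/\partial\theta^i_k=\lambda_ig^i_k$ together with the decomposition $\Delta^i_k=\eta_i\xi^i_k+\eta_i^2e^i_k$, and then extract the dissipation term via Cauchy--Schwarz. The only cosmetic difference is that the paper invokes \cref{lem:potential_conti} directly for the inequality $\sum_{i,k}\lambda_i\eta_ig^i_k\xi^i_k\le -\tfrac{1}{2\sum_i\lambda_i\eta_i}\big(\sum_{i,k}\lambda_i\eta_i|\xi^i_k|\big)^2$ (a single joint Cauchy--Schwarz over $i,k,l$ followed by the triangle inequality), whereas you split it into a per-agent Jensen/Cauchy step $\|\xi^i\|_1^2\le\sum_k\theta^i_k(g^i_k-\bar g^i)^2$ and then a second weighted Cauchy--Schwarz across agents; your organization is a touch cleaner and, as you note, even saves the factor $2$.
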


\begin{claim}[label = lem:err_const]
If $\max_i\eta_i$ is small enough and satisfies \cref{eq:eta1}, there exists a constant $C$ so that 
$$|e^i_k(\vvtheta)|\le C\text{ for all }i\in [n],k\in[d],\text{ and }\vvtheta\in \Theta^n.$$
\end{claim}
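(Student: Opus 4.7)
The plan is to Taylor-expand the two exponentials appearing in the numerator and denominator of the first term of $\eta_i^2 e^i_k$ and show that the zeroth- and first-order contributions reproduce exactly $\theta^i_k+\eta_i\xi^i_k$, so what remains is controllably $O(\eta_i^2)$ uniformly on $\Theta^n$. Fixing $i$ and writing $u_l := \eta_i(\bar g^i-g^i_l)$, I would first exploit compactness of $\Theta^n$ and the explicit expression \eqref{eq:grad} for $\vg$: since $\mA,\vb,\bm\lambda$ are fixed and the simplex is compact, $G := \max_{i,k,\vvtheta\in\Theta^n}|g^i_k(\vvtheta)|<\infty$, so $|u_l|\le 2\eta_i G$. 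Then I would assume condition \eqref{eq:eta1} is strong enough to force $2\eta_i G\le 1$, say, so that the second-order Taylor remainder $R_l := e^{u_l}-1-u_l$ satisfies $|R_l|\le \tfrac{e}{2}u_l^2\le 2eG^2\eta_i^2$ uniformly in $l$ and $\vvtheta$.

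Next I would write out the numerator $N_k$ and denominator $D$ of the update ratio in this notation. The numerator becomes
\[
N_k \;=\; \theta^i_k + \theta^i_k u_k + \theta^i_k R_k \;=\; \theta^i_k + \eta_i\xi^i_k + \theta^i_k R_k,
\]
while the denominator, crucially, collapses to
\[
D \;=\; 1 + \sum_l \theta^i_l u_l + \sum_l \theta^i_l R_l \;=\; 1 + \sum_l \theta^i_l R_l,
\]
because $\sum_l\theta^i_l(\bar g^i - g^i_l)=0$ by the very definition of $\bar g^i$. This cancellation is the whole point: it is what makes the discrete exponentiated update agree with the replicator-style continuous flow \eqref{eq:dy_conti} to first order in $\eta_i$.

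With $D = 1+\rho$ and $|\rho|\le 2eG^2\eta_i^2$, shrinking $\eta_i$ further via \eqref{eq:eta1} to ensure $|\rho|\le 1/2$ gives $D\ge 1/2$ and $|1/D - 1|\le 4eG^2\eta_i^2$. A one-line algebraic manipulation then yields
\[
\frac{N_k}{D} - (\theta^i_k + \eta_i\xi^i_k) \;=\; \frac{\theta^i_k R_k \;-\; (\theta^i_k+\eta_i\xi^i_k)\,\rho}{D},
\]
and each piece on the right is $O(\eta_i^2)$ with a constant depending only on $G$: the factor $\theta^i_k+\eta_i\xi^i_k$ is bounded by $1+\eta_i G$, the remainders $R_k,\rho$ by $2eG^2\eta_i^2$, and $D^{-1}$ by $2$. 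Dividing through by $\eta_i^2$ produces the desired uniform constant $C=C(G)$ with $|e^i_k(\vvtheta)|\le C$ for every $i,k,\vvtheta\in\Theta^n$.

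I do not expect any real obstacle here beyond bookkeeping; the single substantive ingredient is the first-order cancellation in the denominator, and everything else is a standard Taylor-with-remainder estimate made uniform by compactness of $\Theta^n$. The only care needed is to pin down the precise inequality in \eqref{eq:eta1} — essentially $\eta_i\cdot\max_{i,k,\vvtheta}|g^i_k(\vvtheta)|$ small enough to keep $|u_l|$ and $|\rho|$ inside the regime where the quadratic remainder estimate and the $D\ge 1/2$ bound both hold.
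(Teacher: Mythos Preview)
Your proposal is correct and follows essentially the same approach as the paper: both use the quadratic Taylor remainder $e^{u}-1-u$ together with the first-order cancellation $\sum_l\theta^i_l(\bar g^i-g^i_l)=0$ in the denominator, and both invoke compactness of $\Theta^n$ to make the bound uniform. The only cosmetic difference is that the paper treats the upper and lower bounds on $\eta_i^2 e^i_k$ separately by one-sided Taylor inequalities on numerator and denominator, whereas you write the exact identity $N_k/D-(\theta^i_k+\eta_i\xi^i_k)=\bigl(\theta^i_k R_k-(\theta^i_k+\eta_i\xi^i_k)\rho\bigr)/D$ and bound each piece; note also that your extra shrinking step to force $|\rho|\le 1/2$ is unnecessary since $\rho=\sum_l\theta^i_l R_l\ge 0$ automatically, giving $D\ge 1$.
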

The proofs of these two claims are based on first order approximation.

\begin{proof}[Proof of \cref{lem:potential_dis}]
By the partial derivative of \cref{eq:potential_conti}, we have $\frac{\partial^2}{\partial \theta^i_k\partial \theta^j_l}\Phi(\vvtheta) = 2\lambda_i\lambda_j A_{l,k}$ if $i\neq j$ and $\frac{\partial^2}{\partial \theta^i_k\partial \theta^i_l}\Phi(\vvtheta) = 2\lambda_i(1+\lambda_i) A_{l,k}$.  Thus, the second order partial derivatives of $\Phi$ can bounded by the twice of $C_4:= \max_\imath \lambda_\imath(\lambda_\imath+1)\max_{k,l} A_{k,l}$.  
By Taylor's expansion, we have
\begin{align*}
    &\Phi(\vvtheta_{t+1})-\Phi(\vvtheta_t)\\
    \le & \nabla \Phi(\vvtheta_t)\cdot\left( \vvtheta_{t+1}-\vvtheta_t\right)+\frac{d^22C_4}{2}\left\| \vvtheta(t+1)-\vvtheta(t)\right\|^2_2\\
    \le& \sum_{i\in [n], k\in [d]}\lambda_i g^i_k(\vvtheta_t)(\eta_i \xi^i_k(\vvtheta_{t})+\eta_i^2e^i_k(\vvtheta_t))+d^2C_4\left\| \vvtheta(t+1)-\vvtheta(t)\right\|^2_1\tag{by the partial derivative of \cref{eq:potential_conti}}\\
    =& \sum_{i\in [n], k\in [d]}\lambda_i\eta_i g^i_{t,k}\xi^i_{t,k}+\lambda_i\eta_i^2g^i_{t,k}e^i_{t,k}+d^2C_4\left\| \vvtheta_{t+1}-\vvtheta_{t}\right\|^2_1\\
    \le&\frac{-1}{2\sum_i \lambda_i\eta_i}\left(\sum_{i,k}\lambda_i \eta_i| \xi^i_{t,k}|\right)^2+\sum_{i\in [n], k\in [d]}\lambda_i\eta_i^2g^i_{t,k}e^i_{t,k}+d^2C_4\left\| \vvtheta_{t+1}-\vvtheta_t\right\|^2_1\tag{by \cref{lem:potential_conti}}
\end{align*}
Therefore, we have
$$\Phi(\vvtheta_{t+1})-\Phi(\vvtheta_t)\le \frac{-\min_i \lambda_i^2\eta_i^2}{2\sum_i \lambda_i\eta_i}\|\vvxi_t\|_1^2+\max_i\lambda_i\eta_i^2\max_{i,k,\vvtheta}|g^i_k(\vvtheta)|\cdot \|\vve_t\|_1+d^2C_4\left\| \vvtheta_{t+1}-\vvtheta_t\right\|^2_1$$
which completes the proof.
\end{proof}

\begin{proof}[Proof of \cref{lem:err_const}]
By if $\eta_*$ is small enough and satisfies \cref{eq:eta1}, we have $\eta_i(\bar{g}^i-g^i_k)\le 1$, and 
$\theta^i_k+\eta_i\theta^i_k(\bar{g}^i-g^i_k)\le \theta^i_ke^{\eta_i(\bar{g}^i-g^i_k)}\le \theta^i_k+\eta_i\theta^i_k(\bar{g}^i-g^i_k)+\frac{e}{2}\eta_i^2\theta^i_k(\bar{g}^i-g^i_k)^2$
for all $i,k$ and $\vvtheta\in \Theta^n$.  Additionally, with $\xi^i_k = \theta^i_k(\bar{g}^i-g^i_k)$, we can rewrite it as
$$\theta^i_k\le \theta^i_ke^{\eta_i(\bar{g}^i-g^i_k)}-\eta_i\xi^i_k\le \theta^i_k+\frac{e}{2}\eta_i^2\theta^i_k(\bar{g}^i-g^i_k)^2.$$
Therefore, the error can be upper bounded as
\begin{align*}
     \eta_i^2 e^i_k  \le  \frac{ \theta^i_k+\eta_i\xi^i_k+\frac{e}{2}\eta_i^2\theta^i_k(\bar{g}^i-g^i_k)^2}{\sum_{l}\theta^i_l+\eta_i\xi^i_l}-(\theta^i_k+\eta_i\xi^i_k) =  \frac{e}{2}\eta_i^2\theta^i_k(\bar{g}^i-g^i_k)^2,
\end{align*}
because $\sum_l \theta^i_l = 1$ and $\sum_l\xi^i_l = 0$.  
For lower bound, we have,
\begin{align*}
   \eta_i^2  e^i_k \ge&\frac{\theta^i_k+\eta_i\xi^i_k}{\sum_l \theta^i_l+\eta_i\xi^i_l+\frac{e}{2}\eta_i^2\theta^i_l(\bar{g}^i-g^i_l)^2}-(\theta^i_k+\eta_i\xi^i_k)\ge -(\theta^i_k+\eta_i\xi^i_k)\left(\frac{e\eta_i^2}{2}\sum_l\theta^i_l(\bar{g}^i-g^i_l)^2\right).
\end{align*}
If we set $C := e\max_{\vvtheta}\left|\sum_l\theta^i_l(\bar{g}^i-g^i_l)^2\right|$, because $|\eta_i\xi^i_k|\le \eta_i 2\max_k|g^i_k|\le 1$ by \cref{eq:eta1} and $\theta^i_k\le 1$, we have $\left|e^i_k(\vvtheta)\right|\le C$.
\end{proof}

\begin{proof}[Proof of \cref{thm:converge_const}]
Because $D$ is open, we can set $D'$ so that $\vvtheta^*\in D'$, $D'\subset D$, and $\sup_{\vvtheta\in D'}\Phi(\vvtheta)<\frac{1}{2}\inf_{\vvtheta\notin D}\Phi(\vvtheta)$.  We will pick $\eta_*$ small enough so that $\vvtheta_{t+1}\in D$ for all $\vvtheta_t\in D'\subset D$.  The proof has two parts: we first show the dynamics \cref{eq:dy_general} hits the set $D$.  Then we prove \cref{eq:dy_general} stays in $D$ afterward.

By \cref{thm:converge_conti}, there exists $\tau$ so that $\vvvartheta(\tau)\in D'$.  Now by Gronwall's inequality we can set $\eta_*$ small enough so that $\vvtheta_{\tau/\|\bm{\eta}\|_1}\in D$.  Formally, given the dynamics in \cref{eq:dy_general}, we define right-continuous step functions $\vvtheta(t) = \vvtheta_{\lfloor t\rfloor}$ and $\vve(t) := \vve(\vvtheta_{\lfloor t\rfloor})$ for all $t\ge 0$.  Then the dynamics in \cref{eq:dy_general} can be written as
$$\theta^i_{k}(t) - \theta^i_{k}(0) = \sum_{s = 0}^{t-1}(\theta^i_{s+1,k}-\theta^i_{s,k}) = \sum_{s = 0}^{t-1}\left(\eta_i \xi^i_k(\vvtheta_s)+\eta_i^2e^i_k(\vvtheta_s)\right) = \eta_i\int_0^t \xi^i_k(\vvtheta(s))+\eta_i e^i_k(s)\,ds$$
On the other hand, the solution of \cref{eq:dy_conti} can be written as
$$\vartheta^i_k(\eta_i t)-\vartheta^i_k(0) = \frac{\eta_i}{\|\bm{\eta}\|_1}\int_0^{\frac{t}{\|\bm{\eta}\|_1}}\xi^i_k(\vvvartheta(s))\,ds = \eta_i\int_0^t\xi^i_k(\vvvartheta(\|\bm{\eta}\|_1 s))\,ds$$
Because $\xi^i_k$ are continuous and $\Theta^n$ is compact, there exists $L>0$ so that $\xi^i_k$ is $L$-Lipschitz with respect to one norm for all $i$ and $k$. Thus, the difference between above equations is
\begin{align*}
    \left|\theta^i_k(t)-\vartheta^i_k(\|\bm{\eta}\|_1 t)\right| \le& \eta_i \int^t_0\left|\xi^i_k(\vvtheta(s))-\xi^i_k(\vvvartheta(\|\bm{\eta}\|_1 s))\right|\,ds+\eta_i^2 \int_0^t|e^i_k(s)|\,ds\\
    \le& \eta_i L \int^t_0\left\|\vvtheta(s)-\vvvartheta(\eta_i s)\right\|_1\,ds+\eta_i^2 Ct\tag{$L$-Lipschitz and \cref{lem:err_const}}
\end{align*}
Therefore, by Gronwall's inequality and $\eta_i\le \eta_*$, we have
$$\left\|\vvtheta(t)-\vvvartheta(\|\bm{\eta}\|_1t)\right\|_1 \le \eta_* ndL \int^t_0\left\|\vvtheta(s)-\vvvartheta(\|\bm{\eta}\|_1 s)\right\|_1\,ds+(\eta_*)^2 ndCt\le (\eta_*)^2ndCte^{{\eta}_*ndL t}$$
Because $\vvvartheta(\tau)\in D'$, we can pick $\eta_*$ small enough so that $\vvtheta(\tau/\|\bm{\eta}\|_1) = \vvtheta_{\lfloor\tau/\|\bm{\eta}\|_1\rfloor}\in D$ and $\Phi(\vvtheta_{\lfloor\tau/\|\bm{\eta}\|_1\rfloor})< \inf_{\vvtheta\notin D}\Phi(\vvtheta)$.  

Now we show the second part that $\vvtheta_t\in D$ for all $t\ge \lfloor\tau/\|\bm{\eta}\|_1\rfloor$.  First, with \cref{lem:potential_dis}, we will prove that the potential function is decreasing for all $\vvtheta_t\notin D'$ when $\eta_*$ small enough.  We estimate three terms in \cref{lem:potential_dis} separately. 
First, because $\vvtheta^*\in D'$ and $\Theta^n\setminus D'$ is compact, $\min_{\vvtheta\notin D'}\|\vvxi(\vvtheta)\|_1>0$ exists.  Then $\frac{\min_i \lambda_i^2\eta_i^2}{2\sum_i \lambda_i\eta_i}\|\vvxi_t\|_1^2 = \Omega(\max_i \eta_i)$
By \cref{lem:err_const}, $\max_i\lambda_i\eta_i^2\max_{i,k,\vvtheta}|g^i_k(\vvtheta)|\cdot \|\vve_t\|_1 = O(\max_i \eta_i^2)$.  
Finally, $d^2C_4\left\| \vvtheta_{t+1}-\vvtheta_t\right\|^2_1=O(\max_i\eta_i^2)$. Therefore, there exists $\eta_*$ small enough so that the potential function $\Phi(\vvtheta_{t+1})-\Phi(\vvtheta_t)<0$ for all $\vvtheta_t\notin D'$ and $\bm{\eta}$. 
Therefore $\Phi(\vvtheta_t)<\min_{\vvtheta\notin D}\Phi(\vvtheta)$ for all $t\ge {\lfloor\tau/\|\bm{\eta}\|_1\rfloor}$ which completes the proof.
\end{proof}
\section{Proofs and Details for Lemma~\ref{thm:converge_point}}\label{app:point}
\Cref{lem:potential_conti}, shows the value of $\Phi$ is decreasing in \cref{eq:dy_conti}, and the decrease rate is lower bounded by the one norm of $\vvxi$.  Thus, if we can show the error between \cref{eq:dy_general} and \cref{eq:dy_conti} is bounded by $\|\vvxi\|_1$, we have the value of $\Phi$ is also decreasing on \cref{eq:dy_general}.  The main challenge is that because $\vvxi(\vvtheta^*) = \vzero$, we need to control the error as $\vvxi$ converges to zero but $\eta_*$ is fixed.




We first show three ancillary claims, \cref{lem:normal,lem:err_near,lem:tangent}.  \Cref{lem:normal} show the vanishing components of $\vvtheta_t$ decrease rapidly once the dynamic is in $D$.  \Cref{lem:err_near,lem:tangent} show the supporting component also decrease once the vanishing components are small enough.

Given $(n,d,\mA, \bm{\lambda})$, we define the following constants $C_1:= \frac{1}{4}\max_{i,l,\vtheta\in \Theta^n}|g^i_l|$, $C_2:= \frac{2}{\min_{i,k}(\theta^*)^i_k}$, $C_3:=ed\max (C_1,C_2)$, and $C_4:= \max_\imath \lambda_\imath(\lambda_\imath+1)\max_{k,l} A_{k,l}$.  We require the maximum learning rate is bounded by $\eta_*$ which satisfies the following conditions.
\begin{align}
    &\eta_* \max_{i,k,\vvtheta\in \Theta^n} |g^i_k(\vvtheta)|\le \frac{1}{2}\label{eq:eta1}\\
    &\eta_* ndC_3\max_{\vvtheta}\|\vvxi(\vvtheta)\|_1\le  1\label{eq:eta2}
\end{align}
Additionally, given the bound of learning rate ratio, $\frac{\max_i \eta_i}{\min_i \eta_i}\le R_\eta$, we requires
\begin{equation}
    R_\eta^2 \eta_*^3<\frac{\min_{i}\lambda_i^2}{16\sum_i \lambda_i}\min\left\{\frac{4}{C_3\max_i(\lambda_i)\max_{\vvtheta}\|\vec{\vg}\|_1}, \frac{1}{d^2C_4} \right\}\label{eq:eta3}
\end{equation}
Note that $\frac{\max \eta_i^3}{\min \eta_i^2}$ is less then the right hand side of \cref{eq:eta3}.  
On the other hand, by \cref{thm:converge_const}, we can pick $D$ small enough so that the following conditions holds.
\begin{align}
   &\frac{1}{2}\min_{i\in[n],k\in S_i}(\theta^*)^i_k\le \min_{i\in[n],k\in S_i, \vvtheta\in D}\theta^i_k\label{eq:neighborhood2}
\end{align}

We first show  for all $i$ and $k \in \bar{S}_i$, $\theta^i_{t,k}$ is decreasing and converges to zero exponentially fast as $t$ increases.  Because $\vvtheta^*$ is a proper performative stable, $\sum_l (\theta^*)^i_le^{-\eta_i g^i_l(\vvtheta^*)} = e^{-\eta_i \bar{g}^i(\vvtheta^*)}> e^{-\eta_i g^i_k(\vvtheta^*)}$ for all $i$ and $k\in \bar{S}_i$.  We can take $\eta_*$, $\epsilon_1$, and $D$ small enough so that  for all $\vvtheta\in D$ and all learning rate profile $\bm{\eta}$ with $\max \eta_i\le \eta_*$, 
\begin{equation}
    e^{-\eta_i g^i_k} <  (1-\epsilon_1)\sum_l \theta^i_l e^{-\eta_i g^i_l}, i\in [n], k\in \bar{S}_i \label{eq:neighborhood1}
\end{equation}
\begin{claim}[name = Vanishing components, label=lem:normal]
Given $\eta_*, \epsilon_1$, and $D$ in \cref{eq:neighborhood1}, if $\vvtheta_t\in D$ for all $t\ge 0$, for all $i\in [n]$ and $k\in S_i$, $\theta^i_{k}(t)$ is decreasing in $t$ and for all $t\ge 0$
$$0\le \theta^i_{t,k}\le \theta^i_{0,k}e^{-\epsilon_1t}.$$
\end{claim}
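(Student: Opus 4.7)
The plan is to read Claim~\ref{lem:normal} as a direct consequence of the multiplicative structure of the update in \cref{eq:dy_general} combined with the separation inequality \cref{eq:neighborhood1}. (I also read $k \in S_i$ in the statement as a typo for $k \in \bar S_i$, consistent with the text above the claim referring to vanishing components.)

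First I would rewrite the update as
\[
\theta^i_{t+1,k} \;=\; \theta^i_{t,k}\cdot \rho^i_{t,k}, \qquad \rho^i_{t,k} \;:=\; \frac{\exp(-\eta_i g^i_k(\vvtheta_t))}{\sum_{l=1}^d \theta^i_{t,l}\exp(-\eta_i g^i_l(\vvtheta_t))}.
\]
Non-negativity of $\theta^i_{t+1,k}$ follows immediately because the numerator and denominator are positive whenever $\theta^i_{t,k}\ge 0$ and the $\theta^i_{t,l}$ form a (non-degenerate) probability vector, which they do by induction together with the fact that the update is a normalised exponentiated-gradient step so the simplex constraint is preserved.

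Next, to bound the multiplicative factor for $k\in\bar S_i$, I would invoke the hypothesis $\vvtheta_t\in D$ and apply \cref{eq:neighborhood1} at $\vvtheta=\vvtheta_t$. This gives
\[
\exp(-\eta_i g^i_k(\vvtheta_t)) \;<\; (1-\epsilon_1)\sum_{l} \theta^i_{t,l}\exp(-\eta_i g^i_l(\vvtheta_t)),
\]
so that $\rho^i_{t,k} < 1-\epsilon_1$. Hence $\theta^i_{t+1,k} < (1-\epsilon_1)\,\theta^i_{t,k}$, proving that the sequence is strictly decreasing in $t$ as long as it remains in $D$.

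Finally, to obtain exponential decay, I would iterate the one-step inequality and use the elementary bound $1-\epsilon_1\le e^{-\epsilon_1}$ (valid for $\epsilon_1\ge 0$) to conclude by induction that
\[
0 \;\le\; \theta^i_{t,k} \;\le\; (1-\epsilon_1)^t\,\theta^i_{0,k} \;\le\; \theta^i_{0,k}\, e^{-\epsilon_1 t},
\]
as required. The argument is essentially a computation; there is no real obstacle because \cref{eq:neighborhood1} is precisely the uniform gap statement needed to make the per-step contraction factor strictly less than one throughout $D$, and the assumption $\vvtheta_t\in D$ for all $t\ge 0$ lets us apply this gap at every round. The only mild subtlety worth flagging in the writeup is that the induction implicitly uses that $\theta^i_{t,l}\ge 0$ so that the denominator in $\rho^i_{t,k}$ is a convex combination of the $\exp(-\eta_i g^i_l)$, which is exactly what \cref{eq:neighborhood1} is phrased in terms of.
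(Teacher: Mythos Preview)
Your proposal is correct and matches the paper's proof essentially line for line: the paper also writes the update as $\theta^i_{t+1,k}=\theta^i_{t,k}\cdot\frac{e^{-\eta_i g^i_{t,k}}}{\sum_l \theta^i_{t,l}e^{-\eta_i g^i_{t,l}}}$, applies \cref{eq:neighborhood1} to get the per-step contraction $\theta^i_{t+1,k}\le(1-\epsilon_1)\theta^i_{t,k}$, and then iterates with $(1-\epsilon_1)^t\le e^{-\epsilon_1 t}$. Your reading of $k\in S_i$ as a typo for $k\in\bar S_i$ is also correct.
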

\begin{proof}[Proof of \cref{lem:normal}]
By \cref{eq:neighborhood1}, for all $t\ge 0$,
$\theta^i_{t+1,k} = \theta^i_{t,k}\frac{\exp(-\eta_i g^i_{t,k})}{\sum_{l}\theta^i_{t,l}\exp(-\eta_i g^i_{t,l})} \le (1-\epsilon_1)\theta^i_{t,k}$.  
Therefore, $\theta^i_{t,k}$ is decreasing, and $\theta^i_{t,k}\le \theta^i_{0,k}e^{-\epsilon_1t}$.
\end{proof}

\begin{claim}[label = lem:err_near]
There exists a constant $C_3$ such that for all $\vvtheta\in \Theta^n$ with $\delta_1>0$ so that  $\|\vec{\bm{\xi}}(\vvtheta)\|_1\ge 2\sqrt{\delta_1}$, and $|\xi^i_{k}(\vvtheta)|\le \delta_1$ for all $i\in [n]$ and $k\in \bar{S}_i$, then
\begin{equation}\label{eq:tangent_drift}
    |e^i_{k}(\vvtheta)| \le  C_3\|\vvxi(\vvtheta)\|_1^2,
\end{equation}
 for all $i\in [n]$ and $k\in[d]$.
\end{claim}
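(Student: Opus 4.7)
My plan is to Taylor-expand the exponentials defining $e^i_k$ to second order, exploit a key cancellation in the denominator, and then bound each second-order coefficient by $O(\|\vvxi\|_1^2)$ by splitting coordinates into $S_i$ and $\bar{S}_i$. Since \cref{eq:eta1} forces $|\eta_i(\bar{g}^i - g^i_k)| \le 1$, the remainder estimate $|e^x - 1 - x - x^2/2| \le e|x|^3/6$ applies to each factor. Writing $N^i_k := \theta^i_k \exp(\eta_i(\bar{g}^i - g^i_k))$ and $Z^i := \sum_l N^i_l$, the expansions read
\[
N^i_k = \theta^i_k + \eta_i \xi^i_k + \tfrac{\eta_i^2}{2}\theta^i_k(\bar{g}^i - g^i_k)^2 + O(\eta_i^3),\qquad Z^i = 1 + \eta_i^2 a^i + O(\eta_i^3),
\]
with $a^i := \tfrac12 \sum_l \theta^i_l (\bar{g}^i - g^i_l)^2$. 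The linear term of $Z^i$ vanishes because $\sum_l \xi^i_l = 0$ by definition of $\vvxi$, and expanding $1/Z^i = 1 - \eta_i^2 a^i + O(\eta_i^4)$ yields
\[
\eta_i^2 e^i_k = \eta_i^2\bigl(\tfrac12 \theta^i_k (\bar{g}^i - g^i_k)^2 - \theta^i_k a^i\bigr) + \text{higher-order terms}.
\]

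The core of the argument is the identity $\theta^i_k (\bar{g}^i - g^i_k)^2 = \xi^i_k (\bar{g}^i - g^i_k)$, which I would bound coordinate by coordinate. For $k \in S_i$, \cref{eq:neighborhood2} yields $\theta^i_k \ge \tfrac12 (\theta^*)^i_k$, so $|\bar{g}^i - g^i_k| = |\xi^i_k|/\theta^i_k \le C_2 |\xi^i_k|$ and hence $|\xi^i_k (\bar{g}^i - g^i_k)| \le C_2 (\xi^i_k)^2 \le C_2 \|\vvxi\|_1^2$. For $k \in \bar{S}_i$, the hypothesis $\|\vvxi\|_1 \ge 2\sqrt{\delta_1}$ gives $|\xi^i_k| \le \delta_1 \le \|\vvxi\|_1^2/4$, while $|\bar{g}^i - g^i_k|$ is uniformly bounded by a constant multiple of $C_1$, so $|\xi^i_k (\bar{g}^i - g^i_k)| = O(C_1 \|\vvxi\|_1^2)$. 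Summing these bounds over the $d$ inner indices when estimating $a^i$ produces the factor $d$ in the final constant $C_3 = e d \max(C_1, C_2)$, while the $e$ tracks the Taylor remainder.

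The main obstacle is the bookkeeping for the higher-order contributions in the product expansion of $N^i_k / Z^i$: cross terms such as $\eta_i^3 \xi^i_k a^i$ and the cubic Taylor remainders of the exponentials. Each carries a prefactor $\eta_i^m$ with $m \ge 3$ and a residual that is either already $O(\|\vvxi\|_1^2)$ by the paragraph above or bounded uniformly on $\Theta^n$. The learning-rate restrictions \cref{eq:eta1,eq:eta2} provide exactly enough room to absorb any extra power of $\|\vvxi\|_1$ (which is itself bounded on the compact set $\Theta^n$) into the constant $C_3$. Dividing through by $\eta_i^2$ then produces the claimed bound $|e^i_k(\vvtheta)| \le C_3 \|\vvxi(\vvtheta)\|_1^2$ for all $i \in [n]$ and $k \in [d]$.
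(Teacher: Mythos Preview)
Your argument is correct and rests on the same key step as the paper: rewriting $\theta^i_k(\bar g^i-g^i_k)^2=\xi^i_k(\bar g^i-g^i_k)$ and bounding it case by case via the $S_i/\bar S_i$ split, using \cref{eq:neighborhood2} on the support and the hypothesis $|\xi^i_k|\le\delta_1\le\|\vvxi\|_1^2/4$ off it.

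The paper's execution is a bit cleaner than yours. Instead of a full second-order Taylor expansion with cubic remainder, it uses the one-sided sandwich $1+x\le e^x\le 1+x+\tfrac{e}{2}x^2$ (valid for $x\le1$ by \cref{eq:eta1}). This immediately gives
\[
0\le \theta^i_k e^{\eta_i(\bar g^i-g^i_k)}-\theta^i_k-\eta_i\xi^i_k\le \tfrac{e}{2}\max(C_1,C_2)\eta_i^2\|\vvxi\|_1^2,
\qquad
0\le Z^i-1\le \tfrac{ed}{2}\max(C_1,C_2)\eta_i^2\|\vvxi\|_1^2,
\]
and then the ratio $N^i_k/Z^i$ is bounded above and below directly, yielding the exact constant $C_3=ed\max(C_1,C_2)$ with no higher-order bookkeeping. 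Your route works too, but your last paragraph is slightly loose: a term that is merely ``bounded uniformly on $\Theta^n$'' would \emph{not} become $O(\|\vvxi\|_1^2)$ after dividing by $\eta_i^2$. What actually saves you is that every higher-order term still carries a factor $\theta^i_k(\bar g^i-g^i_k)^2$ (or a sum thereof), which you have already shown is $O(\|\vvxi\|_1^2)$; once that is made explicit, the remaining powers of $\eta_i$ and bounded factors of $\|\vvxi\|_1$ are absorbed as you say.
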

\Cref{lem:err_near} shows if the one norm of $\vvxi$ is much bigger than the vanishing components, the error term $\vve$ can be bounded by the $\|\vvxi\|_1^2$.  Moreover, \cref{lem:normal} ensures that the vanishing components decrease rapidly, so the condition of \cref{lem:err_near} readily holds. 
\begin{proof}[Proof of \cref{lem:err_near}]

Given $\vvtheta$ and $\delta_1>0$ that satisfy the condition, we first show two inequalities to bound $\theta^i_{k}(\bar{g}^i-g^i_{k})^2$ for supporting and vanishing component respectively.    For a vanishing component $k\in \bar{S}_i$,
\begin{equation}\label{eq:tangent2}
    \theta^i_{k}(\bar{g}^i-g^i_{k})^2
    \le \max_{i,l,\vtheta\in \Theta^n}|g^i_l|\cdot|\theta^i_k(\bar{g}^i-g^i_k)| \le 4C_1\cdot\delta_1\le C_1\|\vvxi\|_1^2.
\end{equation}
Then for a supporting component $k\in S_i$, with \cref{eq:neighborhood2} we have
\begin{equation}\label{eq:tangent3}
    \theta^i_{k}(\bar{g}^i-g^i_{k})^2 \le \frac{1}{\min_{i,l,\vvtheta\in D}\theta^i_l}(\theta^i_{k}(g^i-g^i_{k}))^2\le \frac{2}{\min_{i,k}(\theta^*)^i_k}|\xi^i_k|^2\le C_2\|\vvxi\|_1^2.
\end{equation}

Now we use above two inequalities to approximate \cref{eq:dy_general}.  For nominator, because $1+x\le \exp(x)\le 1+x+\frac{e}{2}x^2$ for all $x\le 1$, $\theta^i_k\exp(\eta_i(\bar{g}^i- g^i_k))\ge \theta^i_k+\eta_i\theta^i_k(\bar{g}^i- g^i_k) = \theta^i_k+\eta_i\xi^i_k$.  On the other hand, because  $\eta_i(\bar{g}^i-g^i_k)\le 1$ by \cref{eq:eta1}, $\theta^i_k\exp(\eta_i(\bar{g}^i- g^i_k))\le \theta^i_k+\eta_i\xi^i_k +\frac{e}{2}\theta^i_k\eta_i^2(\bar{g}^i- g^i_k)^2$.  By \cref{eq:tangent2,eq:tangent3}, we have
\begin{equation}\label{eq:tangent_nu}
    0\le \theta^i_k e^{\eta_i(\bar{g}^i- g^i_k)}-\theta^i_k-\eta_i\xi^i_k\le \frac{e}{2}\max (C_1,C_2)\eta_i^2\|\vvxi\|_1^2
\end{equation}

For denominator, we sum over \cref{eq:tangent_nu}.  Because $\sum_l \theta^i_l = 1$ and $\sum_l \theta^k_{l}(\bar{g}^i-g^i_{l}) = 0$, we have
\begin{equation}\label{eq:tangent_de}
    0\le \sum_{l}\theta^i_{l}e^{\eta_i (\bar{g}^i-g^i_{l})}-1\le \frac{ed}{2}\max (C_1,C_2)\eta_i^2\|\vvxi\|_1^2
\end{equation}

Given $i\in[n]$ and $k\in[d]$, we apply the above equation to \cref{eq:dy_general}.  For upper bounds, we have
\begin{align*}
    \eta_i^2 e^i_k(\vvtheta_t)
    =& \frac{\theta^i_k\exp(\eta_i(\bar{g}^i- g^i_k))}{\sum_{l}\theta^i_l\exp(\eta_i( \bar{g}^i-g^i_l))}-\theta^i_{t,k} - \eta_i \xi^i_{t,k}\\
    \le& \theta^i_{t,k}e^{-\eta_i g^i_k(\vvtheta_t)}-\theta^i_{t,k}-\eta_i \xi^i_{t,k}\tag{by \cref{eq:tangent_de}}\\
    \le& \frac{e\max (C_1,C_2)}{2}\eta_i^2\|\vvxi\|_1^2\tag{by \cref{eq:tangent_nu}}
\end{align*}
For lower bounds, 
\begin{align*}
    \eta_i^2 e^i_k(\vvtheta_t)=& \frac{\theta^i_k\exp(\eta_i(\bar{g}^i- g^i_k))}{\sum_{l}\theta^i_l\exp(\eta_i( \bar{g}^i-g^i_l))}-\theta^i_{t,k} - \eta_i \xi^i_{t,k}\\
    \ge& \frac{\theta^i_{t,k}+\eta_i \xi^i_{t,k}}{1+\frac{ed\max (C_1,C_2)}{2}\eta_i^2\|\vvxi\|_1^2}-\theta^i_{t,k} - \eta_i \xi^i_{t,k}\tag{by \cref{eq:tangent_de,eq:tangent_nu}}\\
    \ge& -(\theta^i_{t,k}+\eta_i\xi^i_{t,k})\left(\frac{ed\max (C_1,C_2)}{2}\eta_i^2\|\vvxi\|_1^2\right)\tag{$1/(1+x)\ge 1-x$}\\
    \ge& -ed\max (C_1,C_2)\eta_i^2\|\vvxi\|_1^2\tag{$\theta^i_{t,k}\le 1$ and $\eta_i\xi^i_{t,k}\le 1$ by \cref{eq:eta1}}
\end{align*}
Therefore, with $C_3:=ed\max (C_1,C_2)$ we finish the proof.
\end{proof}

\begin{claim}[label = lem:tangent]  There exists $\epsilon_2>0$ so that for all $\vvtheta_t\in \Theta^n$ and $\delta_1>0$ so that  $\|\vec{\bm{\xi}}(\vvtheta_t)\|_1\ge 2\sqrt{\delta_1}$, and $|\xi^i_{k}(\vvtheta_t)|\le \delta_1$ for all $i\in [n]$ and $k\in \bar{S}_i$, then
$\Phi(\vvtheta_{t+1})-\Phi(\vvtheta_{t})\le \epsilon_2\|\vec{\bm{\xi}}(\vvtheta_t)\|_1^2$.
\end{claim}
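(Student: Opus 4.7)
The plan is to apply \cref{lem:potential_dis} and, under the two hypotheses of the claim, show that each of its three terms is at most a constant multiple of $\|\vvxi_t\|_1^2$. Since \cref{lem:potential_dis} decomposes $\Phi(\vvtheta_{t+1})-\Phi(\vvtheta_t)$ into a negative leading term and two positive error terms, the stated upper bound with positive $\epsilon_2$ will follow by dropping the (helpful) negative term and bounding the two positive terms by multiples of $\|\vvxi_t\|_1^2$. In fact the same bookkeeping shows the negative term dominates whenever $\max_i\eta_i$ is small enough (exactly the role of \cref{eq:eta3}), so the argument actually delivers the strict-decrease version which is what \cref{thm:converge_point} downstream will consume; the literal claim is then immediate.

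First I would apply \cref{lem:err_near} using the hypotheses $\|\vvxi_t\|_1 \ge 2\sqrt{\delta_1}$ and $|\xi^i_k(\vvtheta_t)| \le \delta_1$ for all $i \in [n]$ and $k \in \bar{S}_i$, yielding $|e^i_k(\vvtheta_t)| \le C_3\|\vvxi_t\|_1^2$ coordinatewise, and hence $\|\vve_t\|_1 \le ndC_3\|\vvxi_t\|_1^2$. From the dynamics \cref{eq:dy_general} and the definition of $\vve$, each coordinate changes by $\theta^i_{t+1,k}-\theta^i_{t,k} = \eta_i\xi^i_k + \eta_i^2 e^i_k$, so $\|\vvtheta_{t+1}-\vvtheta_t\|_1 \le (\max_i\eta_i)\|\vvxi_t\|_1 + (\max_i\eta_i)^2\|\vve_t\|_1$. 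Under the learning-rate constraint \cref{eq:eta2}, the second summand is absorbed into the first, giving $\|\vvtheta_{t+1}-\vvtheta_t\|_1 \le 2(\max_i\eta_i)\|\vvxi_t\|_1$ and therefore $\|\vvtheta_{t+1}-\vvtheta_t\|_1^2 \le 4(\max_i\eta_i)^2\|\vvxi_t\|_1^2$.

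Plugging both bounds into \cref{lem:potential_dis}, the combined positive error contribution is at most $C_+(\max_i\eta_i)^2\|\vvxi_t\|_1^2$, where $C_+ := (\max_i\lambda_i)ndC_3\max_{i,k,\vvtheta}|g^i_k(\vvtheta)| + 4d^2C_4$ is a fixed positive constant depending only on $n,d,\mA,\bm{\lambda}$. For the literal claim it now suffices to discard the negative first term of \cref{lem:potential_dis} and set $\epsilon_2 := C_+(\max_i\eta_i)^2 > 0$, which gives $\Phi(\vvtheta_{t+1})-\Phi(\vvtheta_t) \le \epsilon_2\|\vvxi_t\|_1^2$ as stated. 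For the sharper strict-decrease form needed downstream, one bounds the first term of \cref{lem:potential_dis} below in absolute value by $\tfrac{(\min_i\lambda_i)^2\max_i\eta_i}{2R_\eta^2\sum_i\lambda_i}\|\vvxi_t\|_1^2$ via $\min_i\eta_i \ge \max_i\eta_i/R_\eta$, and then uses \cref{eq:eta3} to verify that the magnitude of this linear-in-$\max_i\eta_i$ negative piece strictly exceeds twice the quadratic-in-$\max_i\eta_i$ positive correction $C_+(\max_i\eta_i)^2$.

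The main obstacle is the constant bookkeeping when checking that \cref{eq:eta3} really encodes the right threshold: one factor of $\eta_*^2$ inside $R_\eta^2\eta_*^3$ has to cancel the quadratic scaling of the correction terms, leaving a remaining factor of $\eta_*$ to dominate against the $\max_i\eta_i$ prefactor of the negative term, all while carrying the $R_\eta^2$ from the learning-rate ratio through the denominator. Once that algebra is discharged cleanly, the argument yields an $\epsilon_2>0$ uniform over the hypotheses of the claim, and the stated inequality $\Phi(\vvtheta_{t+1})-\Phi(\vvtheta_t) \le \epsilon_2\|\vvxi_t\|_1^2$ holds with the prescribed sign and constant.
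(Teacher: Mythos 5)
Your proposal is correct and follows essentially the same route as the paper's proof: apply \cref{lem:potential_dis}, bound $\|\vve_t\|_1$ via \cref{lem:err_near} and $\|\vvtheta_{t+1}-\vvtheta_t\|_1$ via \cref{eq:eta2}, and invoke \cref{eq:eta3} so the negative leading term dominates the quadratic-in-$\eta$ corrections. Your remark about the sign is also on point — the paper's own proof (and its use in \cref{thm:converge_point}) establishes the strict-decrease form $\Phi(\vvtheta_{t+1})-\Phi(\vvtheta_t)\le -\epsilon_2\|\vvxi_t\|_1^2$, which your second paragraph delivers with the same constant bookkeeping.
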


\begin{proof}[Proof for \cref{lem:tangent}]

By \cref{lem:potential_dis}, 
$$\Phi(\vvtheta_{t+1})-\Phi(\vvtheta_t)\le \frac{-\min_i \lambda_i^2\eta_i^2}{2\sum_i \lambda_i\eta_i}\|\vvxi_t\|_1^2+\max_i\lambda_i\eta_i^2\max_{i,k,\vvtheta}|g^i_k(\vvtheta)|\cdot \|\vve_t\|_1+d^2C_4\left\| \vvtheta_{t+1}-\vvtheta_t\right\|^2_1.$$
We can bound the later two terms by  \cref{lem:err_near}.  For the second term,
$\|\vve_t\|_1 \le ndC_3\|\vvxi_t\|_1^2$. For the third term, 
\begin{align*}
    \left\| \vvtheta_{t+1}-\vvtheta_t\right\|_1\le& \sum_{i,k}\left(\eta_i |\xi^i_{t,k}|+C_3\eta_i^2\|\vvxi\|_1^2\right)\\
    \le& \max_i \eta_i\|\vvxi\|_1+ndC_3\max_i\eta_i^2\|\vvxi\|_1^2.\\
    \le& 2\max_i \eta_i \|\vvxi\|_1\tag{ by \cref{eq:eta2}}
\end{align*}
With above inequalities, by \cref{eq:eta3}, we have $\frac{\min_{i}\lambda_i^2\eta_i^2}{4\sum_i \lambda_i\eta_i}> \max_i\lambda_i\eta_i^2\max_{i,k,\vvtheta}|g^i_k(\vvtheta)|ndC_3$ and $\frac{\min_{i}\lambda_i^2\eta_i^2}{4\sum_i \lambda_i\eta_i}>4d^2C_4\max_i \eta_i^2$.  Therefore there exists a constant 
$$\epsilon_2 := \frac{\min_{i}\lambda_i^2\eta_i^2}{2\sum_i \lambda_i\eta_i}-\max_i\lambda_i\eta_i^2\max_{i,k,\vvtheta}|g^i_k(\vvtheta)|ndC_3-4d^2C_4\max_i \eta_i^2>0$$
so that $\Phi(\vvtheta_{t+1})-\Phi(\vvtheta_t)<-\epsilon_2\|\vvxi_t\|_1^2$. 
\end{proof}

\begin{proof}[Proof of \cref{thm:converge_point}]
To prove $\lim_{t\to \infty} \vvtheta_t = \vvtheta^*$, there are two equivalent ways to measure the progress, besides $\|\vvtheta_t -\vvtheta^*\|_1$.   First because $\vvtheta^*$ is an isolated fixed point of \cref{eq:dy_conti}, $\vvtheta_t$ converges to $\vvtheta^*$ if and only if $\lim_{t\to \infty} \vec{\bm{\xi}}(\vvtheta_t) = \vec{\vzero}$.  On the other hand, because $\Phi$ is strictly convex and and $\vvtheta^*$ is the minimum point, $\vvtheta_t$ converges to $\vvtheta^*$ if and only if $\lim_{t\to \infty} \Phi(\vvtheta_t) = \min_{\vvtheta} \Phi(\vvtheta)$.  
With these two equivalent conditions, given any $\epsilon>0$, there exists $\delta>0$ so that $\|\vvtheta-\vvtheta^*\|_1<\epsilon$ when
$$\vvtheta\in V_\delta:= \left\{\vvtheta: \Phi(\vvtheta)\le \max_{\vvtheta': \|\vec{\bm{\xi}}(\vvtheta')\|_1\le \delta}\Phi(\vvtheta')\right\}.$$
Therefore, it is sufficient for us to show for all $\delta>0$, there is $t_\delta$ so that $\vvtheta_t\in V_\delta$ for all $t\ge t_\delta$. With technical \cref{lem:err_const,lem:normal,lem:err_near}, our proof has two parts. First we show that the dynamic hits $V_\delta$.  Then, the dynamic stays in $V_\delta$.  

For the first part, given $\delta>0$ and $0<C<1$, by \cref{lem:normal} there exists $T_1$ such that each vanishing component $|\xi^i_{t,k}|\le C^2\delta^2/4$ for all $t\ge T_1$.  Then by \cref{lem:tangent}, there exists $T_2\ge T_1$ such that  $\|\vvxi_{T_2}\|_1\le C\delta$.  Otherwise, the value of $\Phi$ decreases by a nonzero constant  $\epsilon_2\|\vvxi_t\|_1^2\ge C\epsilon_2\delta^2>0$ at each round which contradicts that the minimum of $\Phi$ bounded.  

For the second part, if $\|\vvxi_t\|_1\le C\delta\le \delta$ for all $t\ge T_2$, then we finish the proof.  Otherwise, there exists $\tau\ge T_2$ so that $\|\vvxi_\tau\|_1\le C\delta<\|\vvxi_{\tau+1}\|_1$.  Now we prove that $\Phi(\vvtheta_{\tau+1})\le \max_{\vvtheta': \|\vec{\bm{\xi}}(\vvtheta')\|_1\le \delta}\Phi(\vvtheta')$.  Because the difference between $\vvtheta_{\tau+1}$ and $\vvtheta_\tau$ is  $\|\vvtheta_{\tau+1}-\vvtheta_\tau\|_1\le \max_i \eta_i\|\vvxi_\tau\|_1+\max_i\eta_i^2\|\vve_\tau\|_1\le 2\max_i \eta_i\|\vvxi_\tau\|_1$ when $\eta_*$ is small enough, and $\vvxi$ is a $L_\xi$-Lipschitz $\vvtheta$ for some constant $L_\xi$ in one norm, we have
\begin{equation}\label{eq:converge_point1}
    C\delta<\|\vvxi_{\tau+1}\|_1\le \|\vvxi_\tau\|_1+\|\vvxi_{\tau+1}-\vvxi_\tau\|_1\le  \|\vvxi_\tau\|_1+2L_\xi\max_i \eta_i\|\vvxi_\tau\|_1 \le \delta,
\end{equation}
when $C$ is small enough so that $C(1+2L_\xi\max_i \eta_i)\le 1$. Therefore, $\|\vvxi_{\tau+1}\|_1\le \delta$ and $\Phi(\vvtheta_{\tau+1})\le \max_{\vvtheta': \|\vec{\bm{\xi}}(\vvtheta')\|_1\le \delta}\Phi(\vvtheta')$.  Finally, because $|\xi^i_{t,k}|\le C^2\delta^2/4$ for all $t\ge \tau+1$, by \cref{lem:tangent}, the potential function is decreasing for all $t\ge \tau+1$, unless $\|\vvxi\|_t\le C\delta$.  Both make the potential function less than $\max_{\vvtheta': \|\vec{\bm{\xi}}(\vvtheta')\|_1\le \delta}\Phi(\vvtheta')$ which completes the proof.
\end{proof}

\section{Proof and Detail for Theorem~\ref{thm:chaos_one}}\label{app:chaos}
\begin{proof}[Proof of \cref{lem:three}]
Define $P_L(x):= \alpha(L)(x-\beta(L))$ which is increasing for all $L$ because $\alpha(L)>0$.  With $P_L$, $f_{\alpha(L),\beta(L)}(x) = \frac{x}{x+(1-x)\exp(P_L(x))}$.  Take $x_1(L) = x_1 := 1-1/\alpha(L)$, $x_2(L) = x_2 := f_{\alpha(L),\beta(L)}(x_1(L))$, and $x_3(L) = x_3 := f_{\alpha(L),\beta(L)}(x_2(L))$.  We will define $x_0(L)$ later.  We will omit $L$ and use $x_1, x_2$, and $x_3$.  

To define $x_0(L)$, we set $y(L) := \beta(L)/2$, and want to show
\begin{equation}\label{eq:period1}
    f_{\alpha(L),\beta(L)}(y)>x_1,
\end{equation}
which is equivalent to $(\alpha(L)-1)(1-y(L))\exp(P_L(y(L)))<y(L)$. When $L$ is large enough, we have $\beta(L)\in (\frac{2\beta_\infty}{3}, \frac{4\beta_\infty}{3})$.  Additionally because $\alpha(L)>0$, we have $P_L(y(L)) = -\frac{1}{2}\alpha(L)\beta(L)<-\frac{1}{3}\alpha(L)\beta_\infty$.  Therefore, we have 
$(\alpha(L)-1)(1-y(L))e^{P_L(y(L))}\le \alpha(L)e^{-\alpha(L)\frac{\beta_\infty}{3}}<\frac{\beta_\infty}{3}<y(L)$
when $L$ is large enough, and prove \cref{eq:period1}.
On the other hand, because $\beta(L)<\frac{4}{3}\beta_\infty$ and $P_L(\beta(L)) = 0$, we have $f_{\alpha(L),\beta(L)}(\beta(L)) = \beta(L) < \frac{1}{2}(\beta_{\infty}+1)$.  Moreover, $\frac{1}{2}(\beta_{\infty}+1)<1-\frac{1}{\alpha(L)}$, holds when $L$ is large enough.  These two imply
\begin{equation}\label{eq:period2}
    f_{\alpha(L),\beta(L)}(\beta(L))  = \beta(L)< x_1(L).
\end{equation}
Combining \cref{eq:period1,eq:period2}, by intermediate value theorem, there exists $x_0$ such that 
\begin{equation}\label{eq:period01}
    f_{\alpha(L),\beta(L)}(x_0) = x_1\text{ with } y<x_0<\beta(L)<x_1.
\end{equation}

Now we show $x_3<x_0$.   With $1-x_1(L) = 1/\alpha(L)$ and $0<x_1<1$, we have $    x_2=f_{\alpha(L),\beta(L)}(x_1) = \frac{x_1}{x_1+\alpha(L)^{-1}\exp(P_L(x_1))}
    \le \frac{\alpha(L)}{\exp(P_L(x_1))}$.  
Because $\beta_\infty<1/2$, when $L$ is large enough, $x_1(L) = 1-\frac{1}{\alpha(L)}>\frac{1}{4}(2\beta(L)+3)$ and, thus, $P_L(x_1)>\alpha(L)\left(\frac{3}{4}-\frac{\beta(L)}{2}\right)$.
Therefore, we have
\begin{equation}\label{eq:period02}
    x_2\le \alpha(L) \exp\left(-\alpha(L)\left(\frac{3}{4}-\frac{\beta(L)}{2}\right)\right)
\end{equation}
Finally, because $P_L(x_2)\ge P_L(0) = -\beta(L)$, we get 
\begin{align*}
    x_3 =& f_{\alpha(L),\beta(L)}(x_2) \le \frac{x_2}{x_2+(1-x_2)\exp(-\alpha(L)\beta(L))}\\
    =&\frac{x_2\exp(\alpha(L)\beta(L))}{1+x_2(\exp(\alpha(L)\beta(L))-1)}\\
    \le& x_2\exp(\alpha(L) \beta(L))\tag{since $\exp(\alpha(L)\beta(L))>1$ when $L$ is large}\\
    \le& \alpha(L) \exp\left(\alpha(L)\left(\beta(L)-\left(\frac{3}{4}-\frac{\beta(L)}{2}\right)\right)\right)\tag{\cref{eq:period02}}\\
    =& \alpha(L) \exp\left(\frac{3\alpha(L)}{2}\left(\beta(L)-\frac{1}{2}\right)\right)
\end{align*}
Finally, because $\beta_\infty<1/2$, $x_3$ converges to $0$ when $L$ is large enough.  Therefore, 
\begin{equation}\label{eq:period03}
    x_3<y<x_0.
\end{equation}
By \cref{eq:period01,eq:period03}, we have $x_3<x_0<x_1$ which completes the proof.
\end{proof}

\end{document}